\pgfplotsset{compat=1.16}
\theoremstyle{definition}
\newcommand{\dfa}{\mathcal{A}}
\newcommand{\sample}{\mathcal{S}}
\newcommand{\abs}[1]{\ensuremath{|#1|}}
\newcommand{\NP}{\ensuremath{\mathsf{NP}}\xspace}
\newcommand{\NPO}{\ensuremath{\mathsf{NPO}}\xspace}
\begin{document}
\title{Unsupervised Automata Learning via Discrete Optimization}
%
%
\author{
Simon Lutz\inst{1,4}\and
Daniil Kaminskyi\inst{1,4} \and
Florian Wittbold\inst{3}\and
Simon Dierl\inst{1}\and
Falk Howar\inst{1,2}\and
Barbara König\inst{3}\and
Emmanuel Müller\inst{1}\and
Daniel Neider\inst{1,4}
}

\institute{
    TU Dortmund University, Germany \\
    \email{\{simon.lutz, daniil.kaminskyi, simon.dierl, falk.howar, daniel.neider\}@tu-dortmund.de\\
           emmanuel.mueller@cs.tu-dortmund.de}
\and 
    Fraunhofer ISST, Germany
\and
    University of Duisburg-Essen, Germany \\  
    \email{\{barbara\_koenig, florian.wittbold\}@uni-due.de}
\and
    Center for Trustworthy Data Science and Security, UA Ruhr, Germany
}
\authorrunning{Lutz et al.}
%
%
\maketitle              
\begin{abstract}
Automata learning is a successful tool for many application domains such as robotics and automatic verification.
Typically, automata learning techniques operate in a supervised learning setting (active or passive) where they learn a finite state machine in contexts where additional information, such as labeled system executions, is available.
However, other settings, such as learning from unlabeled data - an important aspect in machine learning - remain unexplored.
To overcome this limitation, we propose a framework for learning a deterministic finite automaton (DFA) from a given multi-set of unlabeled words.
We show that this problem is computationally hard and develop three learning algorithms based on constraint optimization.
Moreover, we introduce novel regularization schemes for our optimization problems that improve the overall interpretability of our DFAs.
Using a prototype implementation, we demonstrate practical feasibility in the context of unsupervised anomaly detection.

\keywords{Automata Learning  \and Unsupervised Learning \and Discrete Optimization.}
\end{abstract}
%
%
%
\section{Introduction}
\label{sec:intro}

In the last decades, the algorithmic learning of finite automata (or automata learning for short) has proven to be a successful tool in many application domains, ranging from pattern and  language recognition~\cite{garcia1990use} over robotics~\cite{rivest1989inference,rieger1995inferring} to automatic verification~\cite{oliveira2001efficient,groce2006adaptive,habermehl2005regular} and software testing~\cite{aichernig2018model}.
For reactive system verification, for instance, the goal of automata learning is to provide an appropriate abstraction of the system's input–output relations as a finite-state machine~\cite{hungar2003domain}.

Traditionally, the literature on automata learning distinguishes two main settings: active and passive learning.
In active learning~\cite{Angluin87}, the algorithm (called learner) interacts with a so-called teacher.
This teacher has access to a regular language and is able to answer two types of queries.
Membership queries ask whether a specific word is in the target language and equivalence queries ask whether a conjectured automaton
is equivalent to the language in question.
In passive learning~\cite{BiermannF72,Trakhtenbrot1973FiniteA}, the learning algorithm is given a finite set of words which are labeled as positive or negative, i.e., whether they are contained in the regular target language or not, respectively.
Then, the objective is typically to learn a minimal automaton that accepts all positive words and rejects all negative ones. 

While many advances in active and passive learning have expanded upon these seminal works, other important learning settings remain unexplored.
For instance, the field of unsupervised learning is a well-studied aspect in machine learning that, so far, has been ignored in the context of automata learning.
However, many important unsupervised learning problems, such as anomaly detection, also arise for automata and reactive systems.
Currently, they are addressed via use case specific solutions, which are hard to engineer and difficult to transfer to other settings.

To overcome this gap, this paper proposes a generic approach for unsupervised automata learning based on discrete optimization.
Similar to passive learning, we rely on a given, finite set of words but assume that, a priori, no additional information, such as positive or negative labels, is available.
While our ideas are applicable to many other unsupervised learning settings on sequential data, in this paper, we focus on a crucial application in unsupervised machine learning: anomaly detection (i.e., identifying patterns in the data that do not conform to expected behavior~\cite{DBLP:journals/csur/ChandolaBK09}).
This choice is motivated by the many application domains of anomaly detection, including cybersecurity, law enforcement, medicine, and fraud detection, to name but a few.


To be more precise, we aim to learn a DFA from a given finite multi-set $\sample$ of unlabeled sequences that can distinguish normal from anomalous sequences. To this end, we consider three unsupervised learning settings.
In the first setting, we assume two natural numbers $\ell, u \in \mathbb N$ with $\ell \leq u\leq\abs{\sample}$ to be given as input.
The task is then to learn a minimal DFA that accepts at least $\ell$ and at most $u$ sequences from~$\sample$.
Minimality refers to a minimal number of states and is a common requirement in automata learning~\cite{BiermannF72,DBLP:conf/icgi/HeuleV10,DBLP:conf/isola/LeuckerN12,DBLP:conf/aaai/NeiderGGT0021}.
The parameters $\ell$ and $u$, on the other hand, serve as an estimate for the lower and upper number of anomalies in the data set and are used to prevent degenerate DFAs (i.e., DFAs that accept or reject all sequences).
This setting operates under the assumption that normal sequences are drastically different from anomalies, allowing them to be separated by a rather simple pattern.
Hence, by looking for an automaton that is as compact as possible, the
classification of anomalies is performed automatically.

In the second setting the user does not need to specify both $\ell$ and $u$, but only one or the other (say, $\ell$).
Additionally, the user must fix a size $n \in \mathbb N$ of the resulting DFA.
The task then is to learn a DFA of size $n$ that accepts the smallest number $k \geq \ell$ of sequences from $\sample$.
In other words, $\ell$ serves as a lower bound on the assumed number of anomalies in the given data set.
In general, the choice of $n$ should be made carefully in this setting, as too large a number may hinder interpretability while, if $n$ is too small, the resulting DFAs may not be able to separate anomalies from normal sequences.

The last setting is motivated by the assumption that all normal sequences are similar to each other, i.e., have a small edit distance, while the anomalies are vastly different from the normal sequences, resulting in a high edit distance.
Under this assumption the user does not need to specify any bounds, but must still fix a size $n \in \mathbb N$ of the resulting DFA.
We learn a DFA that minimizes the distance between pairs of sequences classified as normal, while maximizing the distance between pairs of sequences where one is classified as normal and the other as an anomaly.

Our contributions in this paper are fourfold.
First, we show that learning a DFA of size $n$ from unlabeled data is \NP-complete. This result is in line with the classical learning of DFAs from positive and negative data, which is known to be \NP-complete as well~\cite{DBLP:journals/iandc/Gold78}. Consequently, the first learning problem lies within the complexity class FNP (i.e., the function problem extension of the decision problems in NP) and the second one lies within the class \NPO (i.e., the class of optimization problems whose decision variant lies in \NP).
The complexity of the third learning problem remains an open problem that we will leave as a part of future work.

Second, we develop three learning algorithms, one for each setting.
While in previous work, a DFA was learned by solving a series of constraint satisfaction problems, we reduce learning into a series of constraint optimization problems instead.
This allows us to specify an objective function, thus finding not just any solution but one that is optimized for additional regularization criteria.
The constraint optimization problems can then be solved by highly-optimized mixed-integer programming solvers.

Third, we propose novel regularization terms to enhance the interpretability of the learned DFAs.
In particular, we show how to augment our constraint optimization problems to maximize the number of self-loops and parallel edges (see Figure~\ref{fig:interpretability}).
This approach is orthogonal to the original encoding and can, in principle, also be applied to other constraint-based learning algorithms for finite-state machines.

Fourth, to show the practical feasibility of our three algorithms, we evaluate them empirically on three anomaly detection benchmarks.
We examine both the runtime and the anomaly detection performance for different configuration options and uncertainty w.r.t.~the anomaly frequency.

\subsubsection*{Related Work}
Automata learning has a long history, dating back to the 1970s~\cite{BiermannF72,Trakhtenbrot1973FiniteA}.
One typically distinguishes between active learning and passive learning.

Active learning was first introduced by Dana Angluin in 1987~\cite{Angluin87}.
In her work, Angluin showed that the class of regular languages can be learned efficiently by asking queries to a (minimally adequate) teacher.
Furthermore, she provided an appropriate learning algorithm - called the $\text{L}^*$ algorithm - which approximates the Myhill-Nerode congruence.
Since then, various major improvements to and variants of the original algorithm have been proposed~\cite{RivestS93,Kearns94,MalerP95,Irfan10,MertenHSM11,Howar12,aarts2014tomte,IsbernerS14,IsbernerHS14,Petrenko0GHO14,Frohme19,VaandragerGRW22}.

Passive learning, on the other hand, was pioneered by Biermann and Feldman~\cite{BiermannF72} and Trakhtenbrot and Barzdin~\cite{Trakhtenbrot1973FiniteA}.
Given a set of labeled data, a passive learning algorithm seeks to learn a minimal DFA consistent with the data. 
Algorithms such as Regular Positive Negative Inference (RPNI)~\cite{oncina1992inferring} and the Blue-fringe algorithm~\cite{DBLP:conf/icgi/LangPP98} first construct the prefix acceptor -- the most precise description of the data -- and then generalize it by merging its states while maintaining consistency with the data.
In 1987, Gold~\cite{DBLP:journals/iandc/Gold78} showed that passive learning is computationally hard (i.e., the corresponding decision problem is \NP-complete).
Thus, learning algorithms that use constraint solving have become the de~facto standard for constraint-based passive learning~\cite{DBLP:conf/cade/GrinchteinLP06,DBLP:conf/icgi/HeuleV10,DBLP:conf/atva/Neider12,DBLP:conf/nfm/NeiderJ13,DBLP:conf/aaai/NeiderGGT0021}.


Besides the traditional active and passive learning of deterministic finite automata, other learning settings have also been investigated.
Following the same underlying ideas, various algorithms have been proposed in the literature for learning more expressive state machines such as 
Mealy Machines with and without timers~\cite{Niese2003,ShahbazG09,DBLP:journals/corr/abs-2403-02019},
I/O automata~\cite{DBLP:conf/concur/AartsV10},
non-deterministic automata~\cite{BolligHKL09,BjorklundFK13}, 
alternating automata\cite{DBLP:conf/ijcai/AngluinEF15},
register automata~\cite{DBLP:conf/fm/AartsHKOV12,DBLP:journals/fac/CasselHJS16,DBLP:conf/ifm/GarhewalVHSLS20,DBLP:journals/ml/IsbernerHS14},
weighted automata~\cite{BergadanoV96,BalleM15,HeerdtKR020},
pushdown automata~\cite{DBLP:phd/dnb/Isberner15},
tree automata~\cite{DBLP:journals/tcs/KnuutilaS94,oncina1993inference}, among others.
In the context of incomplete information, Leucker and Neider~\cite{DBLP:conf/isola/LeuckerN12} proposed a variant of Angluin's $\text{L}^*$ algorithm for learning from an inexperienced teacher that sometimes may answer ``don't know'' to a membership query.
However, to the best of our knowledge, learning a Deterministic Finite Automaton completely from unlabeled data remains unexplored.

\section{Preliminaries}
We address the task of learning a deterministic finite automaton from a multi-set of unlabeled sequences ranging over a finite set of symbols.


Following standard notation of automata theory, we refer to a sequence $w = a_1 \dots a_n$ as a \emph{finite word}.
Moreover, we call the nonempty, finite set of \emph{symbols} over which these words can range an \emph{alphabet $\Sigma$}.
The sequence without any symbols, also referred to as \emph{empty word}, is denoted by $\epsilon$.
Furthermore, we denote the set of all words over an alphabet $\Sigma$ as $\Sigma^*$.
In the remainder of this paper, we will refer to the multi-set of sequential data $\sample =\{w_1,\dots, w_n\}$ as a \emph{sample}. Since a word $w$ can be contained multiple times in a sample $\sample$, we denote the number of occurrences of $w$ in $\sample$ as $\sample(w)$.

From a given sample, we learn a \emph{deterministic finite automaton (DFA)}.
Formally, a DFA is a tuple $\dfa = (Q, \Sigma, q_I, \delta, F)$ where $Q$ is a finite set of states, $\Sigma$ is a finite set of (input) symbols, $q_I \in Q$ is the \emph{initial state}, $\delta: Q \times \Sigma \to Q$ is the \emph{state-transition function}, and $F \subseteq Q$ is a set of accepting states.
The \emph{size} of a DFA is defined to be the number of its states $\abs{Q}$.
A \emph{run} on a word $w = a_1 \dots a_n$ is a sequence of states $q_0 \dots q_n$ such that $q_0 = q_I$ and $q_i = \delta(q_{i-1}, a_i)$ for $i \in \{1, \dots, n\}$.
We call a run \emph{accepting} if $q_n \in F$, and \emph{rejecting} otherwise.
The \emph{language} of a DFA~$\dfa$, denoted~$L(\dfa)$, is the set of all words accepted by~$\dfa$.

As mentioned in the introduction, we reduce the task of learning a DFA into a series of \emph{mixed-integer linear programming (MILP)} problems.
Let $\mathit{Var}$ be a finite set of real variables.
An MILP problem consists of two parts, a linear function over the variables, referred to as the \emph{objective function} $\mathit{obj}$, and a conjunction of \emph{linear constraints} $\Phi$ on these variables.
The solution to such a MILP problem is an assignment $\mathit{Var} \to \mathbb{R}$, referred to as a (feasible) \emph{model}, such that the value of the objective function $\mathit{obj}$ is optimal (i.e. minimal/maximal, respectively) while satisfying $\Phi$ (i.e., all constraints).

\section{Problem Formulation}
\label{sec:problem}

In this section, we formally introduce our three unsupervised automata learning problems and prove that the first two are computationally hard.
As mentioned in the introduction, all three learning settings are assumed to be given a sample $\mathcal{S}$ of unlabeled words.
The first setup, which we refer to as \emph{Two-Bound DFA Learning}, additionally requires being given two natural numbers $\ell,u \in \mathbb{N}$ with $\ell \leq u \leq \abs{\sample}$.
While the precise labels of the words in the sample $\mathcal{S}$ are unknown, these numbers provide an estimate of the distribution of positive words in $\mathcal{S}$.
Then, the task is to learn a minimal DFA which accepts at least $\ell$
and at most $u$ words from $\mathcal{S}$.
We formally state this problem as:
\begin{problem}[Two-Bound DFA Learning Problem]
\label{problem:1}\\
Given a multi-set of words $\sample~=~\{w_1, \dots, w_n\}$ and two natural
numbers $\ell,u \in \mathbb{N}$ with $\ell \leq u \leq \abs{\sample}$, construct a DFA $\dfa$ which accepts at least $\ell$ and at most $u$ words
from $\sample$.
\end{problem}

Notice that one may not always find a solution for this problem.
This is illustrated using the following example.
\begin{example}
Consider the bounds $\ell = u = 1$ and the sample $\sample$ which only contains the word $'a'$ twice.
Being deterministic, every DFA has to either accept both copies of the word $'a'$ or reject them both.
Hence there does not exist a DFA fulfilling the bounds in this case. 
\end{example}

Despite this, Problem~\ref{problem:1} is decidable.
To show decidability, we first represent the sample as a \emph{prefix tree}~\cite{de2010grammatical}.
A prefix tree for a sample $\sample$ is a partial DFA (i.e., some transitions are unspecified) without final states such that, after reading a word $w \in \sample$, the DFA is in a unique state $q_w$.
An example of a prefix tree is displayed in Figure~\ref{fig:pt}.
We complete this partial DFA by adding an additional sink state that becomes the target of all unspecified transitions.
To decide whether there exists a DFA accepting at least $\ell$ and at most $u$ words, we iterate over all combinations of final states and check the number of accepted words in each case.
Since there is a unique state for each word in $\sample$, either one of these DFAs fulfills the bounds or we can conclude that none exists.
\begin{figure}
    \centering
    \begin{tikzpicture}[scale=0.5]
        \draw[thick] (0,0) circle (1);
        \node[circle, minimum size = 10mm] (0) at (0,0) {$q_0$};
        
        \draw[thick] (4,1.5) circle (1);
        \node[circle, minimum size = 10mm] (a) at (4,1.5) {$q_{a}$};
        
        \draw[thick] (4,-1.5) circle (1);
        \node[circle, minimum size = 10mm] (b) at (4,-1.5) {$q_{b}$};
        
        \draw[thick] (8,3) circle (1);
        \node[circle, minimum size = 10mm] (aa) at (8,3) {$q_{aa}$};
        
        \draw[thick] (8,0) circle (1);
        \node[circle, minimum size = 10mm] (ab) at (8,0) {$q_{ab}$};
        
        \draw[thick] (8,-3) circle (1);
        \node[circle, minimum size = 10mm] (ba) at (8,-3) {$q_{ba}$};
        
        \draw[thick] (12,3) circle (1);
        \node[circle, minimum size = 10mm] (aaa) at (12,3) {$q_{aaa}$};
        
        \draw[thick, -{stealth}] (-1.8,0) -- (0);
        \draw[thick, -{stealth}] (0) -- (a) node [midway, above] {a};
        \draw[thick, -{stealth}] (0) -- (b) node[midway,above]{b};
        \draw[thick, -{stealth}] (a) -- (aa) node[midway,above]{a};
        \draw[thick, -{stealth}] (a) -- (ab) node[midway,above]{b};
        \draw[thick, -{stealth}] (b) -- (ba) node[midway,above]{a};
        \draw[thick, -{stealth}] (aa) -- (aaa) node[midway,above]{a};
    
    \end{tikzpicture}
    \caption{Prefix tree for the sample $(aa, ab, ba, aaa)$}
    \label{fig:pt}
\end{figure}
However, a DFA constructed this way will generally be unsuitable for applications such as anomaly detection as it suffers from two main issues.
On the one hand, by design, it overfits the sample $\sample$ and thus poorly generalizes to unseen data.
On the other hand, it becomes rather large, hindering interpretability in the sense of Occam's razor.
To overcome these issues, we propose an algorithm that constructs a DFA of \emph{minimal size} fulfilling the given bounds (if one exists).
By requiring minimality, however, the problem becomes computationally hard.
In fact, it can be shown that the problem of whether there exists a DFA
with $n$ states for Problem \ref{problem:1} is \NP-complete. 

\begin{theorem}
\label{thm:NP-complete}
	Given a multi-set of words $\sample$, two natural numbers
	$\ell,u \in \mathbb{N}$ with $\ell \leq u \leq \abs{\sample}$,
	and a natural number $n$ (given in unary), the problem of finding
	a DFA $\dfa$ with $n$ states that accepts at least
	$\ell$ and at most $u$ words from $\sample$ is \NP-complete.
\end{theorem}

In order to prove Theorem~\ref{thm:NP-complete}, we begin by showing that the problem is in \NP, as formalized in the following Lemma.

    \begin{lemma}
    \label{thm:NP}
        Given a multi-set of words $\sample$, two natural numbers $\ell,u \in \mathbb{N}$ with $\ell \leq u \leq \abs{\sample}$, and a natural number $n$ (given in unary), a non-deterministic Turing machine can compute in polynomial time whether there exists a DFA $\dfa$ with $n$ states which accepts at least $\ell$ and at most $u$ words from $\sample$ (i.e., the problem lies in \NP).
    \end{lemma}

    \begin{proof} [of Lemma~\ref{thm:NP}]
    	Let $\sample,\ell,u,$ and $n$ be given.
        As $n$ is unary, a non-deterministic Turing machine can guess an automaton $\dfa$ with $n$ states.
        The number of accepted words from $\sample$ can then be computed in polynomial time by simulation.
        Finally, checking whether this number is at least $\ell$ and at most $u$ is also possible in polynomial time, showing that the problem lies in \NP. \qed
    \end{proof}

Next, we we show \NP-hardness of Problem~\ref{problem:1}, thus we conclude that it is \NP-complete.
    
    \begin{lemma}
    \label{thm:NP-hard}
    	Given a multi-set of words $\sample$, two natural numbers $\ell,u \in \mathbb{N}$ with $\ell \leq u \leq \abs{\sample}$, and a natural number $n$ (given in unary), the problem of finding a DFA $\dfa$ with $n$ states which accepts at least $\ell$ and at most $u$ words from $\sample$, is \NP-hard.
    \end{lemma}

    To proof Lemma~\ref{thm:NP-hard}, we use the \NP-completeness of the following problem (see \cite{DBLP:books/fm/GareyJ79}).

    \begin{problem}\label{problem:NP-Complete}
    	Given finite disjoint sets of words $P,N\subseteq\Sigma^*$ and a unary number $k\in\mathbb{N}_0$, does there exist a deterministic finite automaton $\dfa$ with $k$ states such that $\dfa$ accepts all words in $P$ and rejects all words in $N$.
    \end{problem}

    As detailed in the proof below, \NP-hardness (and thus \NP-completeness) of our problem follows by reduction from Problem \ref{problem:NP-Complete}.
    This reduction makes use of the multi-set structure of the samples to encode positive and negative words from Problem \ref{problem:NP-Complete} in different multiplicities in the multi-set, which can be distinguished by Problem \ref{problem:1}.

    \begin{proof} [of Lemma~\ref{thm:NP-hard}]
    	We show this by a many-one reduction from Problem \ref{problem:NP-Complete}.
    	Let finite disjoint sets $P,N\subseteq\Sigma^*$ and a unary number $k\in\mathbb{N}_0$ be given.
    	We construct an instance of our problem as follows:
        \begin{itemize}
            \item $n\coloneqq k$;
            \item $\sample(w)\coloneqq\begin{cases}
                                    \abs{N}+1&\text{if }w\in P,\\
                                    1&\text{if }w\in N,\\
                                    0&\text{otherwise,}
            \end{cases}$
    
    		for $w\in\Sigma^*$, i.e., the multi-set contains
    		exactly $(\abs{N}+1)$-times all words in $P$ and once all
    		words in $N$;
            \item $\ell=u\coloneqq\abs{P}(\abs{N}+1)$.
        \end{itemize}
        This construction can be done in polynomial time.
    
        Furthermore, if there exists a DFA $\dfa$ with $n$ states solving Problem \ref{problem:1} for $\sample,\ell,$ and $u$ as above, i.e., accepting exactly $\abs{P}(\abs{N}+1)$ words from $\sample$, we have that
        \[
        0\equiv\sum_{w\in P\cup N}\sample(w)\cdot\dfa(w)\equiv\sum_{w\in N}\dfa(w)\pmod{\abs{N}+1}
        \]
        Recall that $\sample(w)$ denotes the number of occurrences of $w$ in $\sample$ and $\dfa(w)$ indicates whether $w$ is accepted by $\dfa$.
    	This shows that $\dfa$ rejects all words in $N$, and, thus
        \[
        \abs{P}(\abs{N}+1)=\sum_{w\in P}\sample(w)\cdot\dfa(w)\Rightarrow\abs{P}=\sum_{w\in P}\dfa(w),
        \]
        showing that $\dfa$ accepts all words in $P$ whence $\dfa$ is also a solution to the instance $(P,N,k)$ of Problem \ref{problem:NP-Complete}.
    
        Similarly, if there exists a DFA $\dfa$ solving the instance $(P,N,k)$ of Problem \ref{problem:NP-Complete}, we have that
        \[
        \sum_{w\in\Sigma^*}\sample(w)\cdot\dfa(w)=(\abs{N}+1)\sum_{w\in P}\dfa(w)=\abs{P}(\abs{N}+1),
        \]
        whence $\dfa$ is also a solution with $n$ states to the instance $(\sample,\ell,u)$ of Problem \ref{problem:1}.
    
        All in all, this shows the reduction from Problem \ref{problem:NP-Complete}, and thus \NP-hardness of finding a solution of Problem \ref{problem:1} with $n$ states. \qed
    \end{proof}


In this first setup, we require the user to provide a lower and an upper bound on the distribution of positive words in a given sample.
However, in practice, this requirement may be too strong.
Thus, in the second setup, which we refer to as \emph{Single-Bound DFA Learning}, 
we reduce the amount of prior knowledge compared to the first case by removing requirement to specify both $\ell$ and $u$.
Instead, we assume to be given only one parameter, say $\ell$ (the case in which $u$ is given is analogous).
Now, in contrast to Problem~\ref{problem:1}, the task to construct a minimal DFA that accepts at least $\ell$ words from $\sample$ always has a trivial solution: the DFA that accepts all words in $S$ only has size 1 and fulfills the bound.
However, this DFA underfits the sample $\sample$ and thus does not capture the underlying structure.
To reduce underfitting, we apply a common technique from automata learning and construct a DFA of a fixed size that accepts the smallest number $k \geq l$ of words from the sample. 
By providing this size as an additional parameter $n$ the user can regularize the trade-off between avoiding underfitting (larger) and interpretability (smaller).
We formally state this problem as:
\begin{problem}[Single-Bound-Learning-Problem]
\label{problem:2}\\
Given a multi-set of words $\sample = \{w_1, \dots, w_n\}$ and two natural numbers $\ell, n \in \mathbb{N}$ with $\ell \leq \abs{\sample}$, construct a DFA $\dfa$ of size $n$ which accepts the smallest number $k \geq l$ of words from $\sample$.
\end{problem}
This problem is the optimization version of Problem~\ref{problem:1}, thus, it is also computationally hard.
In fact, from Problem~\ref{problem:1} being \NP-complete, it immediately follows that Problem~\ref{problem:2} lies within the complexity class \NPO (i.e., the class of optimization problems whose decision variant lies in \NP).

The third setup, which we refer to as \emph{Distance-Based DFA Learning}, is motivated by the assumption that for many applications, the pairs of positive (or negative) words are structurally similar, resulting in a low edit distance.
In contrast, opposite classifications (i.e, one positive and one negative word) are drastically different, resulting in a high edit distance.
In the context of anomaly detection, for instance, deep learning based methods follow this idea and classify new data based on the distance to the training data (e.g.,~\cite{pmlr-v80-ruff18a}).
This assumption allows us to alleviate the user’s burden to specify any bounds.
Instead, along with the sample $\sample$, we only rely on the size $n \in \mathbb N$ of the automata as a regularizer (similar to the second setting) and a distance function over words (in our case, the Levenshtein distance).
The task is then to construct a DFA of size $n$ such that the distance between all pairs of two accepted (rejected) words is minimized while the distance between pairs of both one accepted and one rejected word is maximized.
This dual optimization problem can then be transformed into a plain minimization problem by multiplying the distances to be maximized by $-1$.
This allows us to formally state this problem as:
\begin{problem}[Distance-Based-Learning-Problem]
\label{problem:3}\\
Given a multi-set of words $\sample = \{w_1, \dots, w_n\}$ and a natural number $n \in \mathbb{N}$, construct a DFA $\dfa$ of size $n$ which minimizes the following objective function:
\begin{align}
    \sum\limits_{w_i, w_j \in L(\dfa) \cap \sample} dist(w_i, w_j)  - \sum\limits_{\substack{w_i \in L(\dfa) \cap \sample \\ w_j \notin L(\dfa) \cap \sample}} dist(w_i, w_j) \label{eq:distance}
\end{align}
where $dist(w_i, w_j)$ denotes the Levenshtein distance between two words $w_i$ and $w_j$.
\end{problem}

While we conjecture that this problem is also computationally hard, we leave a proof of its complexity as part of future work.

\section{Learning via Discrete Optimization}
\label{sec:Learn}
In this section, we present our learning algorithms for learning deterministic finite automata from a sample $\sample$ of unlabeled words.
In all three setups, we reduce the tasks to a set of constraint optimization problems and solve them using state-of-the-art mixed-integer programming solvers (in our case, Gurobi~\cite{gurobi}).

\subsection{Two-Bound DFA Learning}
Recall that in the first setup, we are given a sample $\mathcal{S}$ and two bounds $\ell, u \in \mathbb{N}$ with $\ell \leq u \leq \abs{\sample}$.
To learn a minimal DFA that fulfills these bounds, we apply a technique commonly used in automata learning to ensure minimality. 
The idea is to encode the problem for an automaton of fixed size $n$ such that the encoding has two key properties:
\begin{itemize}
    \item There exists a feasible model if and only if there exists a DFA of size $n$ fulfilling the bounds on the acceptance.
    \item This model contains sufficient information to construct such a DFA.
\end{itemize}
Starting with an automaton of size one and increasing the size whenever there is no feasible model guarantees to produce the minimal solution.

We now describe the MILP model we use to learn a DFA of size $n$.
Since we just check the existence of a suitable DFA in the first setup, we can choose any constant objective function, e.g., $\mathit{obj} = 1$.
The set of linear inequalities $\Phi_{\sample, \ell, u}^n = \Phi_{\dfa}^n \land \Phi_\mathcal{B}$ consists of two kinds of constraints: 
\emph{automata constraints} $\Phi_{\dfa}^n$, which encode a DFA of size $n$ and the runs on all words from the sample, and \emph{bound constraints} $\Phi_\mathcal{B}$ encoding the bounds on the acceptance.
Throughout these constraints, we bound the introduced variables to only take on integer values between (and including) $0$ and $1$, thus simulating boolean variables.

\paragraph{Automata constraints $\Phi_{\dfa}^n$} 
The automata constraints are motivated by the SAT encoding of Biermann and Feldman~\cite{BiermannF72}.
Without loss of generality, the states of the DFA form the set
$Q = \{q_0, \dots, q_{n-1}\}$ where $q_0$ is the initial state.
The alphabet $\Sigma$ of the DFA is the set of all symbols appearing
in the sample $\sample$.
To encode the transitions of the DFA we introduce variables $\delta_{q,a,q'}$ for $q,q' \in Q$ and $a \in \Sigma$.
Intuitively, the variable $\delta_{q,a,q'}$ will be set to 1 if and only if the DFA has a transition from state $q$ to state $q'$ on reading $a$.
Furthermore, we introduce variables $f_q$ for $q \in Q$, which indicate whether a state $q$ is a final state.
To encode the runs of the DFA, we start by computing the set of all prefixes in the sample $\mathit{Pref}(\mathcal{S}) = \{ w \mid ww' \in \mathcal{S} \text{ and } w' \in \Sigma^*\}$.
We then introduce a third kind of variable:
$x_{w,q}$ for all $w \in \mathit{Pref}(\mathcal{S})$ and $q \in Q$.
Intuitively, these variables indicate that after reading the prefix $w$ the DFA is in state $q$.\\
We now impose constraints on these variables to encode a DFA and its runs. 
Being deterministic, there must be precisely one transition for each state $q$ and symbol $a$, which we can model by the following constraint:
\begin{align}
    \sum_{q' \in Q} \delta_{q,a,q'} = 1 \hspace{0.75cm} \forall q \in Q, \forall a \in \Sigma \label{eq:single-transition}
\end{align}
Furthermore, after reading a word $w$ the DFA can only be in one state and after reading the empty word $\epsilon$ the DFA is in the initial state, which we defined to be $q_0$.
\begin{align}
    \sum_{q \in Q} x_{w,q} = 1 \hspace{0.75cm} \forall w \in \mathit{Pref}(S) \hspace{0.75cm} \text{and} \hspace{0.75cm} x_{\epsilon,q_0} = 1 \label{eq:single-run}
\end{align}

Moreover, we encode a run based on the following observation:
If the DFA is in some state $q$ after reading the word $w$ and there is a transition from $q$ to $q'$ on reading the symbol $a$, then the DFA is in state $q'$ after reading the word $wa$.
As a constraint in MILP we get:
\begin{align}
    x_{w,q} + \delta_{q,a,q'} - 1 \leq x_{wa,q'} \hspace{0.75cm}  \forall q, q' \in Q, a \in \Sigma, \forall wa \in \mathit{Pref}(S) \label{eq:run}
\end{align}
Finally, we define the automata constraints $\Phi_{\dfa}^n$ to be the conjunction of Equations~\ref{eq:single-transition} to~\ref{eq:run} which concludes the encoding of a DFA of size $n$ and its runs.

\paragraph{Bound constraints $\Phi_\mathcal{B}$}:
To impose constraints on the number of accepted words, we need to track whether a word $w$ is accepted.
This is the case if and only if after reading $w$ the DFA is in some state $q$ and this state is final.
Intuitively, we could express this case as $x_{w,q} \cdot f_q$, however, this is not linear and thus not a valid MILP constraint.
Instead, we exploit the fact that for Boolean variables the multiplication $x_{w,q} \cdot f_q$ is equivalent to the formula $x_{w,q} \land f_q$. 
By introducing fresh variables $\alpha_{w,q}$ for $w \in \sample$ and $q \in Q$ to store the result, this conjunction can be modeled by the following set of constraints:
\begin{align}
    \alpha_{w,q} \geq x_{w,q} + f_q - 1, \quad
    \alpha_{w,q} \leq x_{w,q}, \quad 
    \alpha_{w,q} \leq f_q \hspace{0.75cm}\forall w \in \sample, \forall q \in Q \label{eq:accepting}
\end{align}
Intuitively, the variables $\alpha_{w,q}$ indicate whether a word $w$ is accepted by the DFA (in the state $q$).
Relying on these variables, we can encode the bounds on the acceptance as 
\begin{align}
    \sum_{w \in S} \sum_{q \in Q} \alpha_{w,q} \geq \ell
    \hspace{0.75cm} \text{and} \hspace{0.75cm}
    \sum_{w \in S} \sum_{q \in Q} \alpha_{w,q} \leq u 
    \label{eq: bounds}
\end{align}
Then the bound constraints $\Phi_\mathcal{B}$ are defined as the conjunction of the above inequalities.

After introducing the MILP model, we employ it to construct the minimal DFA that fulfills the given bounds on the acceptance.
The idea is to check the feasibility of the MILP problem with constant objective function $\mathit{obj} = 1$ and linear inequalities $\Phi_{\sample, \ell, u}^n$ for increasing $n$ until either a solution is found or we reach $n = \abs{\mathit{Pref}(\mathcal{S})} + 2$.
As argued above when proving decidability of Problem~\ref{problem:1}, the size of the prefix tree is a natural upper bound for the size of the DFA.
Therefore, we can conclude that there exists no DFA fulfilling the given bounds on the sample when we exceed this size.
In the case where a feasible model exists for some size $n$, we construct the corresponding DFA from this model based on the variables $\delta_{q,a,q'}$ and $f_q$.
This procedure is described by Algorithm~\ref{alg:two-bounds}.
\begin{algorithm}[tbp]
	\caption{Learning with two bounds}\label{alg:two-bounds}
	
	\begin{algorithmic}[1]
		\State \textbf{Input:} Sample $\sample$, Bounds $\ell, u \in \mathbb{N}$
		\State $n\gets 0$
		\Repeat
		\State $n\gets n+1$
		\State Construct $\Phi_{\sample, \ell, u}^n = \Phi_{\dfa}^n \land \Phi_\mathcal{B}$ and set $\mathit{obj} = 1$
		\If{$\mathit{obj},\Phi_{\sample, \ell, u}^n$ has a feasible model (say $m$)}
			\State \Return Construct DFA $\dfa$ using $m$
		\EndIf
		\Until{$n = \abs{\mathit{Pref}(\mathcal{S})} + 2$}
		\State \Return There exists no DFA fulfilling the given bounds
	\end{algorithmic}
\end{algorithm}
The correctness of this algorithm is established by the following theorem:
\begin{theorem}
\label{thm:correctness-two-bounds}
Given a sample $\sample$ and two natural numbers $\ell, u \in \mathbb{N}$ with $\ell \leq u \leq \abs{\sample}$, Algorithm~\ref{alg:two-bounds} terminates and outputs a minimal DFA $\dfa_{\sample}$ which accepts at least $\ell$ and at most $u$ words from $\sample$, if such a DFA exists.
\end{theorem}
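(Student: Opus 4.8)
\emph{Proof plan.} The core of the argument is a correspondence lemma: for every integer $n\geq 1$, the linear system $\Phi_{\sample,\ell,u}^{n}$ is satisfiable if and only if there exists a DFA with $n$ states accepting at least $\ell$ and at most $u$ words of $\sample$ (counted with multiplicity), and, moreover, from any satisfying assignment $m$ the extraction step of Algorithm~\ref{alg:two-bounds} produces exactly such a DFA. Since the objective function is the constant $\mathit{obj}=1$, ``$\Phi_{\sample,\ell,u}^{n}$ has a feasible model'' is just satisfiability of $\Phi_{\sample,\ell,u}^{n}$, so the lemma is the heart of the matter; the termination bound is supplied separately by the prefix-tree argument already used for decidability of Problem~\ref{problem:1}.

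For the ``only if'' direction I would fix a satisfying assignment $m$ and read off $\dfa=(\{q_{0},\dots,q_{n-1}\},\Sigma,q_{0},\delta,F)$, letting $\delta(q,a)$ be the unique state $q'$ with $m(\delta_{q,a,q'})=1$ --- well defined, since \eqref{eq:single-transition} together with the $\{0,1\}$ bounds forces exactly one such $q'$ --- and $F=\{q: m(f_{q})=1\}$. The key step is an induction on $\abs{w}$ showing that for every $w\in\mathit{Pref}(\sample)$ the run of $\dfa$ on $w$ ends in the unique state $q$ with $m(x_{w,q})=1$: the base case $w=\epsilon$ is \eqref{eq:initial} with \eqref{eq:single-run}; in the step $w=va$ (note $\mathit{Pref}(\sample)$ is prefix-closed, so $v\in\mathit{Pref}(\sample)$) the induction hypothesis gives $m(x_{v,q})=1$ for the state $q$ reached on $v$, and with $q'=\delta(q,a)$ constraint \eqref{eq:run} forces $m(x_{va,q'})\geq 1$, hence $=1$, while \eqref{eq:single-run} makes $q'$ unique. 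Consequently $\dfa(w)=1$ exactly when $m(x_{w,q})=m(f_{q})=1$ for that $q$; combined with the clamping constraints \eqref{eq:accepting} and integrality, which force $m(\alpha_{w,q})=m(x_{w,q})\cdot m(f_{q})$, this yields $\sum_{q}m(\alpha_{w,q})=\dfa(w)$ for every $w\in\sample$. Summing over $\sample$ with multiplicity and invoking \eqref{eq:lower-bound}--\eqref{eq:upper-bound} shows $\dfa$ accepts between $\ell$ and $u$ words of $\sample$. The ``if'' direction is the mirror image: given such a DFA, relabel its states as $\{q_{0},\dots,q_{n-1}\}$ with $q_{0}$ initial, set $\delta_{q,a,q'}$ and $f_{q}$ to the characteristic values of $\delta$ and $F$, set $x_{w,q}=1$ iff the run on $w\in\mathit{Pref}(\sample)$ ends in $q$, and $\alpha_{w,q}=m(x_{w,q})\cdot m(f_{q})$; a routine check verifies \eqref{eq:single-transition}--\eqref{eq:upper-bound} and the $\{0,1\}$ bounds, so this assignment is feasible.

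Granting the lemma, the theorem follows quickly. Let $n^{\ast}$ be the least number of states of a DFA accepting between $\ell$ and $u$ words of $\sample$ (if any exists). For every $n<n^{\ast}$ the lemma makes $\Phi_{\sample,\ell,u}^{n}$ unsatisfiable, so the loop body does not return and proceeds to the next iteration; at $n=n^{\ast}$ the system is satisfiable, the algorithm returns the DFA extracted from a feasible model, and by the lemma this DFA has $n^{\ast}$ states and meets the bounds, hence is minimal. Termination, and correctness of the negative answer, follow from the observation used to argue decidability of Problem~\ref{problem:1}: whenever a solution exists, one is realizable on the completed prefix tree, so $n^{\ast}\leq\abs{\mathit{Pref}(\sample)}+1$. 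Hence the algorithm reaches iteration $n=n^{\ast}$ and returns before the stopping value $n=\abs{\mathit{Pref}(\sample)}+2$, and if no solution exists it exits the loop at that value and reports this correctly; each iteration terminates because it only solves a finite integer linear program.

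I expect the real work to be bookkeeping rather than depth. The delicate points are the two uniqueness arguments --- that \eqref{eq:single-transition} and \eqref{eq:single-run} together with integrality upgrade the one-sided inequalities \eqref{eq:run} and \eqref{eq:accepting} to the intended equalities --- getting the base and step of the prefix induction exactly right, and being explicit that the sums in \eqref{eq:lower-bound}--\eqref{eq:upper-bound} range over $\sample$ with multiplicity even though there is only one family of variables $\alpha_{w,q}$ per distinct word, so that ``number of accepted words'' consistently means ``with multiplicity'', matching Problem~\ref{problem:1}.
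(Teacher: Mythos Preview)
Your proposal is correct and follows essentially the same route as the paper's own proof: both establish the bidirectional correspondence between feasible models of $\Phi_{\sample,\ell,u}^{n}$ and $n$-state DFAs meeting the bounds via an induction on prefix length for the $x_{w,q}$ variables, and both conclude termination and minimality from the prefix-tree size bound used in the decidability argument. If anything, you are slightly more explicit than the paper about the role of integrality in upgrading the one-sided constraints \eqref{eq:run} and \eqref{eq:accepting} to equalities, and about the multiplicity convention in the sums \eqref{eq:lower-bound}--\eqref{eq:upper-bound}.
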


\begin{proof} [of Theorem~\ref{thm:correctness-two-bounds}]	
	We prove Theorem~\ref{thm:correctness-two-bounds} in three steps:
	First, we explain how we construct a DFA from a feasible
	model and proof that this automaton is well-defined and solves
	Problem~\ref{problem:1}.
	Afterwards, we show that a feasible model exists for a size $n$
	if and only if there exists a DFA of that size fulfilling the bounds.
	In the end, we establish termination and show that Algorithm~\ref{alg:two-bounds}
	finds a DFA fulfilling the bounds if such a DFA exists.
	By construction this DFA is minimal.
	
	For now let us assume we found a feasible model for some size $n$.
	We show that the DFA $\dfa=(Q,\Sigma,q_I,\delta,F)$ given by
	\begin{itemize}
		\item $Q=\{q_0,\dots,q_{n-1}\},q_I=q_0;$
		\item $\Sigma$ the symbols present in $\sample$;
		\item $\delta:Q\times\Sigma\rightarrow Q,(q,a)\mapsto q'$
			for $q\in Q,a\in\Sigma,$ and $q'\in Q$ such that
			$\delta_{q,a,q'}$ = 1;
		\item $F\coloneqq\{q\in Q\mid f_q = 1\};$
	\end{itemize}
	is well-defined and solves Problem~\ref{problem:1}.
	First of all, Constraint \ref{eq:single-transition} ensures that the state-transition function $\delta$ is well-defined while $f_q$ simulating Boolean variables further ensures that $F$ is well-defined.
	Next, we show that the variables $x_{w,q}$ correspond to the runs of words $w$ from $\sample$ in this well-defined DFA.
	More precisely, we show that $x_{w,q}=1$ if $\dfa$ is in state $q\in Q$ after reading $w\in\mathit{Pref}(\mathcal{S})$ by induction over the prefix length $k=|w|$ (where we w.l.o.g. assume $\sample$ to be non-empty).
	Note that Constraint \ref{eq:single-run} then also ensures the opposite
	direction, i.e., that $\dfa$ is in state $q\in Q$ after reading
	$w\in\mathit{Pref}(\mathcal{S})$ if $x_{w,q}=1$ (as $x_{w,\tilde{q}}=1$
	for the true state $\tilde{q}$ and thus $x_{w,q'}=0\neq1$ for all other
	states $q'\neq\tilde{q}$).
	
	\textit{Base case.} 
	The only prefix of length $k=0$ obviously being the empty word
	$\varepsilon$, Constraint \ref{eq:single-run} gives that
	$x_{\varepsilon,q_0}=1$, showing the statement for prefixes of
	length $k=0$.

	\textit{Induction step.}
	Assuming that the statement holds for any prefix of length less or equal to $k$. 
	If no prefix of length $k+1$ exists in $\sample$, the induction is closed and the statement is proven for all prefixes.
	Assume now that $wa\in\mathit{Pref}(\sample)$ is a prefix of length $k+1$.
	Then $w$ is a prefix of length $k$ and thus fulfills for the state
	$q$ reached after reading $w$ in $\dfa$ that $x_{w,q}=1$.
	Writing $q'\coloneqq\delta(q,a)$, Constraint \ref{eq:run} and the
	definition of $\delta$ then give \[x_{w,q} + \delta_{q,a,q'} - 1 = 1 \leq x_{wa,q'}\]
	whence also $x_{wa,q'}=1$ as a Boolean variable.
	As $wa$ was an arbitrary prefix of length $k+1$, this concludes the induction.
	
	While this proofs that the DFA $\dfa$ is well-defined, the bound constraints $\Phi_\mathcal{B}$ ensure that the number of accepted words is above the lower bound $\ell$ and below the upper bound $u$.
	Hence, the DFA $\dfa$ is well-defined and solves Problem~\ref{problem:1}
	
	In a second step, we show that our MILP problem has a feasible
	model for size $n$ if and only if there exists a DFA of size $n$
	that accepts at least $\ell$ and at most $u$ words from $\sample$.

	$(\Rightarrow):$ Given a feasible model for size $n$, we construct a DFA $\dfa$ as explained above.
	As displayed this DFA $\dfa$ is well-defined and fulfills the bounds on the acceptance.

	$(\Leftarrow):$ Given a DFA $\dfa$ of size $n$ which accepts at
	least $\ell$ and at most $u$ words from $\sample$, let the model
	be given by the natural interpretation of the variables:
	\begin{align*} 
		\delta_{q,a,q'} & \coloneqq\begin{cases}
			1&\text{if }\delta(q,a)=q',\\
		    0&\text{otherwise,}
		\end{cases}\\
		f_q & \coloneqq\begin{cases}
			1&\text{if }q\in F,\\
		    0&\text{otherwise,}
		\end{cases}\\
		x_{w,q} & \coloneqq\begin{cases}
			1&\text{if }\dfa\text{ is in state }q\text{ after reading }w,\\
		    0&\text{otherwise,}
		\end{cases} \\
		\alpha_{w,q} & \coloneqq\begin{cases}
			1&\text{if }\dfa\text{ is in state }q\text{ after reading }w\text{ and }q\in F,\\
			0&\text{otherwise,}
		\end{cases}
	\end{align*}
	Being deterministic the DFA $\dfa$ and thus any model constructed this way obviously fulfills Constraints~\ref{eq:single-transition} to~\ref{eq:run}.
	Furthermore, the definition of $\alpha_{w,q}$ ensures that the set of Constraints~\ref{eq:accepting} is fulfilled and that $\sum_{w\in\sample}\sum_{q\in Q}\alpha_{w,q}=\sum_{w\in\sample}\sum_{q\in F}x_{w,q}$ corresponds to the amount of words in $\sample$ which are accepted by $\dfa$ whence by assumption both Constraints \ref{eq: bounds}  are fulfilled.
	Hence this model is feasible for our MILP problem for size $n$.

	In order to conclude the proof of Theorem~\ref{thm:correctness-two-bounds}, we now show that Algorithm~\ref{alg:two-bounds}
	terminates and finds a DFA fulfilling the bounds, if such a DFA exists.
	Termination itself is straight forward.
	The algorithm iterates over increasing sizes until a feasible model is found or $n = \abs{\mathit{Pref}(\mathcal{S})} + 2$ is reached in which case it concludes that no DFA fulfilling the bounds exists.
	Since solving the MILP problem for each $n$ is computable, the algorithm thus always terminates.
	Furthermore, if a feasible model is found, we showed above that the algorithm returns a DFA fulfilling the bounds.
	On the other hand, as explained in the decidability proof of Problem~\ref{problem:1},
	the size of the prefix tree is a natural upper bound for the minimal DFA fulfilling the bounds.
	In the prefix tree, the run on each word $w$ in the sample $\sample$ leads to a unique state $q_w$.
	Therefore, we can check accepting each combination of words from $\sample$ by making the corresponding states a final or non-final state respectively.
	If no such combination fulfills the bounds, we can conclude that no DFA exists which fulfills the bounds.
	Since the prefix tree can have unspecified transitions, we may need one additional state to which we target all these unspecified transitions in order to construct a well-defined DFA.
	Hence, there either exists a DFA of size $n = \abs{\mathit{Pref}(\mathcal{S})} + 1$ or none at all.
	By the equivalence proven above, we thus have that not finding a feasible model until $n = \abs{\mathit{Pref}(\mathcal{S})} + 2$ shows that truly no DFA fulfilling the bounds exists.
	This concludes the proof of termination and correctness and, thus, of Theorem~\ref{thm:correctness-two-bounds}.

	We have shown that Algorithm~\ref{alg:two-bounds} terminates and finds a DFA which fulfills the bounds on the acceptance, if such a DFA exists.
	We displayed that such a DFA of size $n$ exists if and only if our MILP problem for size $n$ has a feasible model.
	Furthermore, we have shown how to construct this DFA given a feasible model. \qed
\end{proof}

Finally, let us investigate the number of constraints in our MILP model.
This number depends on multiple factors:
\begin{itemize}
	\item The size $n$ of the automaton to be constructed
	\item The size of the alphabet $\Sigma$ over which the words in the sample $\sample$ range
	\item The number of unique words in $\sample$
	\item The size of the prefix tree of $\sample$
\end{itemize}
Let $\abs{\Sigma}$ now denote the size of the alphabet and $p =\abs{\mathit{Pref}(\mathcal{S})}$ denote the size of the prefix tree of $\sample$.
Note that the number of unique words in $\sample$ is a lower bound for the size of the prefix tree.
Then, we obtain the following remark.

\begin{remark}
The number of constraints in the MILP problem is in $\mathcal{O}(n^2 \cdot \abs{\Sigma} \cdot p)$.
\end{remark}

\subsection{Single-Bound DFA Learning}
In order to not clutter this section, we will only describe the encoding for the case where the lower bound $\ell$ is given. 
The case in which the upper bound is given is analogous.
We recall that the task is to construct a DFA of a fixed size $n$ that minimizes acceptance above $\ell$.
As in the first setup, we encode the DFA and the runs on all words in the sample using the same set of variables and automata constraints $\Phi_{\dfa}^n$ as above.
Furthermore, we use the same idea to ensure that the number of accepted words is larger than the lower bound:
We introduce variables $\alpha_{w,q}$ and add Constraints~\ref{eq:accepting} and the corresponding inequality of Constraint~\ref{eq: bounds}.
For the remainder of this section, let $\Phi_{\ell}$ denote the conjunction of these constraints.
In contrast to Two-Bound Learning, we are not satisfied with finding just any DFA, but want to find one that accepts the least number of words while adhering to the lower bound.
To achieve this, we use the following objective function:
\begin{equation}\label{eq:optimization}
    \mathit{obj} = \min\ \sum_{w \in S} \sum_{q \in Q} \alpha_{w,q}
\end{equation}
which minimizes the number of accepted words from $\sample$.
All in all, the resulting Algorithm~\ref{alg:lower-bound} returns a DFA of size $n$ that minimizes acceptance above the lower bound $l$.
\begin{algorithm}[b]
	\caption{Learning with a single bound}\label{alg:lower-bound}
	
	\begin{algorithmic}[1]
		\State \textbf{Input:} Sample $\sample$, Bound $\ell \in \mathbb{N}$, Size $n \in \mathbb{N}$
		\State Construct $\Phi_{\sample, \ell}^n = \Phi_{\dfa}^n \land \Phi_{\ell}$
		\State Set $\mathit{obj} = \min\ \sum_{w \in S} \sum_{q \in Q} \alpha_{w,q}$
		\State Compute optimal model minimizing $\mathit{obj}$ with respect to $ \Phi_{\sample, \ell}^n$, say $m$
		
		\State \Return Construct DFA $\dfa$ using $m$
	\end{algorithmic}
\end{algorithm}
\begin{theorem}
Given a sample $\sample$ and two natural numbers $\ell, n \in \mathbb{N}$ with $\ell \leq \abs{\sample}$, Algorithm~\ref{alg:lower-bound} terminates and outputs a DFA $\dfa_{\sample}$ of size $n$ that accepts the smallest number $k \geq l$ of words from $\sample$.
\end{theorem}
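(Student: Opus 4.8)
The plan is to mirror the three‑part structure of the proof of Theorem~\ref{thm:correctness-two-bounds}, reusing its constructions wherever possible and adding only the optimization argument. First I would recall how a DFA $\dfa = (Q, \Sigma, q_I, \delta, F)$ is read off from a model $m$: set $Q = \{q_0, \dots, q_{n-1}\}$, $q_I = q_0$, $\Sigma$ the symbols occurring in $\sample$, $\delta(q,a) = q'$ for the unique $q'$ with $\delta_{q,a,q'} = 1$ (well‑defined by Constraint~\ref{eq:single-transition}), and $F = \{q \mid f_q = 1\}$. Since the automata constraints $\Phi_{\dfa}^n$ and the accepting constraints~\ref{eq:accepting} are literally the same as in the two‑bound encoding, the induction on prefix length carried out in the proof of Theorem~\ref{thm:correctness-two-bounds} applies verbatim and shows $x_{w,q} = 1$ iff $\dfa$ is in state $q$ after reading $w$; likewise Constraints~\ref{eq:accepting} force $\alpha_{w,q} = x_{w,q}\cdot f_q$ for Boolean values, so $\sum_{q \in Q}\alpha_{w,q} = \dfa(w)$ and hence $\sum_{w \in \sample}\sum_{q \in Q}\alpha_{w,q}$ equals the number of words of $\sample$ accepted by $\dfa$ (with multiplicity). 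In particular the objective function~\ref{eq:optimization} coincides with this acceptance count, and Constraint~\ref{eq:lower-bound} guarantees it is at least $\ell$.

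Second, I would establish a value‑preserving correspondence between feasible models and size‑$n$ DFAs: from a feasible model the construction above yields a size‑$n$ DFA accepting $k \geq \ell$ words, where $k$ is the objective value; conversely, from any size‑$n$ DFA accepting $k \geq \ell$ words the "natural interpretation" of the variables used in the proof of Theorem~\ref{thm:correctness-two-bounds} is a feasible model of $\Phi_{\sample,\ell}^n$ whose objective value is exactly $k$. Consequently, the minimum of the objective over all feasible models equals the smallest number of words $\geq \ell$ that any DFA of size $n$ can accept from $\sample$.

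Third, I would dispatch well‑posedness and termination. The MILP is always feasible: the one‑state DFA accepting every word can be padded with $n - 1$ unreachable states into a DFA of size exactly $n$, and since $\ell \leq \abs{\sample}$ this DFA accepts $\abs{\sample} \geq \ell$ words, hence induces a feasible model by the correspondence. Therefore the solver in Algorithm~\ref{alg:lower-bound} returns an optimal model $m$ rather than reporting infeasibility. Termination is immediate, since the algorithm solves a single MILP and MILP optimization is computable; and by the value‑preserving correspondence the DFA constructed from $m$ accepts the smallest $k \geq \ell$ among all size‑$n$ DFAs, as claimed.

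The only genuinely new ingredient relative to Theorem~\ref{thm:correctness-two-bounds} is this optimization argument, and the one point within it that needs care is the identity between the objective value of a model and the acceptance count of the associated DFA — this is what makes "minimize the objective" equivalent to "minimize acceptance above $\ell$". I expect that bookkeeping, together with the observation that feasibility is never in doubt because $\ell \leq \abs{\sample}$, to be the main (and only mild) obstacle; everything else is inherited from the earlier proof.
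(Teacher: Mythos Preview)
Your proposal is correct and matches the paper's intent: the paper omits the proof entirely, stating only that it is ``similar to the proof of Theorem~\ref{thm:correctness-two-bounds}'', and your plan does precisely that---reuse the well-definedness and run-correspondence arguments verbatim, add the value-preserving correspondence between objective value and acceptance count, and observe feasibility via the all-accepting DFA padded to size $n$. There is nothing to add.
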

We omit the proof of this theorem, which is similar to the proof of Theorem~\ref{thm:correctness-two-bounds}.

\subsection{Distance-Based DFA Learning}
Let us recall the third setup: We are given a sample $\sample$, the size $n \in \mathbb N$ of the automata, and the Leveshtein distance betwwen samples and want to compute a DFA that minimizes Equation~\ref{eq:distance}.
Analogously to the first two setups, we encode the DFA and the runs on all words in the sample using the same set of variables and automata constraints $\Phi_{\dfa}^n$ as above.
To optimize the distance between pairs of words as described in Problem~\ref{problem:3}, we need to keep track of which words will be accepted by the automaton and which will be rejected.
For this, we introduce variables $\alpha_{w,q}$ and $\beta_{w,q}$, respectively, and add Constraints~\ref{eq:accepting} to track whether a word $w$ is accepted by the DFA as well as $\beta_{w,q} = 1 - \alpha_{w,q}$ for all $w \in \sample$ and $q \in Q$, indicating whether a word $w$ is rejected by the automaton.
We then define the objective function in the same way as in Problem~\ref{problem:3}:
\begin{equation}
    \mathit{obj} = \min\ \sum_{w_1, w_2 \in S} \sum_{q_1,q_2 \in Q} \alpha_{w_1,q_1} \cdot \alpha_{w_2,q_2} \cdot dist(w_1, w_2) -  \alpha_{w_1,q_1} \cdot \beta_{w_2,q_2} \cdot dist(w_1, w_2)
\end{equation}
Note that for every pair of words $w_1$ and $w_2$, the distance $dist(w_1, w_2)$ can be precomputed, thus allowing arbitrary complex distance functions to be used.
As above, this MILP model can then be used to construct a DFA of a given size $n$ that minimizes the distance between pairs of accepted words while maximizing the distance between pairs of one accepted and one rejected word (see also Algorithm~\ref{alg:distance}).
\begin{algorithm}[tbp]
	\caption{Learning based on distance}\label{alg:distance}
	
	\begin{algorithmic}[1]
		\State \textbf{Input:} Sample $\sample$, Size $n \in \mathbb{N}$
        \State Compute the Levenshtein distance $dist(w_1, w_2)$ for each pair of words $w_1,w_2 \in \sample$
		\State Construct $\Phi_{\dfa}^n$
		\State Set $\mathit{obj} = \min\ \sum\limits_{\substack{w_1, w_2 \in S \\ q_1,q_2 \in Q}} \alpha_{w_1,q_1} \cdot \alpha_{w_2,q_2} \cdot dist(w_1, w_2) -  \alpha_{w_1,q_1} \cdot \beta_{w_2,q_2} \cdot dist(w_1, w_2)$
		\State Compute optimal model minimizing $\mathit{obj}$ with respect to $\Phi_{\dfa}^n$, say $m$
		
		\State \Return Construct DFA $\dfa$ using $m$
	\end{algorithmic}
\end{algorithm}
\begin{theorem}
Given a sample $\sample$ and two natural numbers $\ell, n \in \mathbb{N}$ with $\ell \leq \abs{\sample}$, Algorithm~\ref{alg:distance} terminates and outputs a DFA $\dfa_{\sample}$ of size $n$ which minimizes 
the following objective function: $\sum\limits_{w_i, w_j \in L(\dfa)} dist(w_i, w_j)  - \sum\limits_{\substack{w_i \in L(\dfa) \\ w_j \notin L(\dfa)}} dist(w_i, w_j)$

\end{theorem}
We omit the proof of this theorem which is similar to the proof of Theorem~\ref{thm:correctness-two-bounds}.

\section{Interpretability}
\label{sec:interpre}

Even though automata are generally regarded as interpretable models \cite{Shvo2021InterpretableSequenceClassification},
they can become unintuitive if there are too many different transitions
to different states.
Therefore, we introduce heuristics aimed at reducing their complexity
and making them more readable for humans.
While \cite{Shvo2021InterpretableSequenceClassification} introduced similar techniques as regularization terms, we adapt them to improve the interpretability of the resulting models, as demonstrated in Figure~\ref{fig:interpretability}.
Note though, that the model's obtained by different simplification heuristics need not be equivalent and that thus the modifications may impede or even improve the models accuracy (see Section~\ref{sec:eval}).
In essence, the following heuristics aim to visually streamline the graphical
representation of the resulting model and thus highlight the important structural insights.
They are implemented by adding a penalty term to the objective function.

\begin{itemize}
    \item \textit{Sink states}:  We favor solutions that have a so-called sink state, which can never be left once it is reached. By our design, all words ending in the sink state are rejected.
        
		To introduce a sink state $q_1$ (i.e., a non-final state with only self-loops) to the automaton, we add the following constraints:
		\begin{align*}
		    \delta_{q_1, a, q_1} & = 1 & \forall a \in \Sigma \\
		    f_{q_1} & = 0 
		\end{align*}
		    Moreover, we add
		\[
			\lambda_s \cdot \left( \sum_{q \in Q, a \in \Sigma} 1 - \delta_{q, a, q_1} \right)
		\]
		to the objective function, which penalizes each transition not targeting the sink state.
		The parameter $\lambda_l \in \mathbb{R}$ is a weight term to be chosen by the user. 
		Note that we need to have at least two states in our DFA to have a sink-state. \\[1ex]
    \item \textit{Self-loops}: By penalizing transitions to other states, we obtain models with a lot of self-loops. By convention, those are omitted in the graphical representation.

		To increase the number of self-loops for the resulting automata, we add
		\[
			\lambda_l \cdot \left( \sum_{q \in Q, a \in \Sigma} \sum_{q' \in Q \setminus \{q\}} \delta_{q, a, q'} \right)
		\]
		to the objective function. This term penalizes each transition
		where the source state is different from the destination state (i.e., not a self-loop).
		Here, $\lambda_l \in \mathbb{R}$ is a weight term to be chosen by the user. \\[1ex]
    \item \textit{Parallel edges}: Similar to self-loops, we prefer solutions where there is only one successor state. Thus, the automata will transition to the same state regardless of the next element $a \in \Sigma$.
        
		Similar to self-loops, we can increase the number of parallel edges by adding
		\[
			\lambda_p \cdot \left( \sum_{q \in Q} \sum_{q' \in Q} e_{q, q'} \right)
		\]
		to the objective function. 
		The boolean variable $e_{q, q'}$ is equal to $1$ if and only if there is at least one transition from $q$ to $q'$
		and can be computed using the following set of constraints:
		\begin{align*}
		    e_{q, q'} & \leq \sum_{a \in \Sigma} \delta_{q, a, q'} & \forall q, q' \in Q \\
		    e_{q, q'} & \geq \delta_{q, a, q'} & \forall q, q' \in Q, \forall a \in \Sigma
		\end{align*}
		These constraints simply model the boolean function $e_{q, q'} \leftrightarrow \bigvee_{a \in \Sigma} \delta_{q, a, q'}$.
		Again, $\lambda_p \in \mathbb{R}$ is a weight term to be chosen by the user.

\end{itemize}

Note that our optimization-based approach is flexible with respect to the heuristics used: as long as a heuristic is expressible as part of the optimization problem, it can be applied. 
Furthermore, this approach is orthogonal to the original encoding and can, in principle, also be applied to other constraint-based learning algorithms for finite-state machines.

\begin{figure}[tbp]
    \centering
    \begin{subfigure}[t]{0.25\textwidth}
        \raggedright
        \includegraphics[scale=0.40]{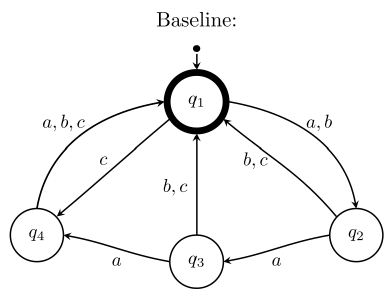}
    \end{subfigure}%
    ~ 
    \begin{subfigure}[t]{0.25\textwidth}
        \raggedright
        \includegraphics[scale=0.40]{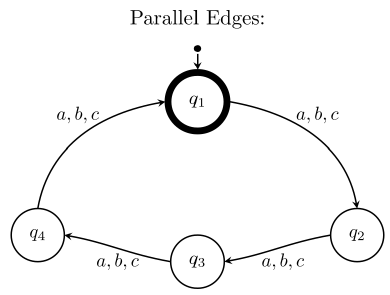}
    \end{subfigure}%
    ~ 
    \begin{subfigure}[t]{0.25\textwidth}
        \raggedright
        \includegraphics[scale=0.40]{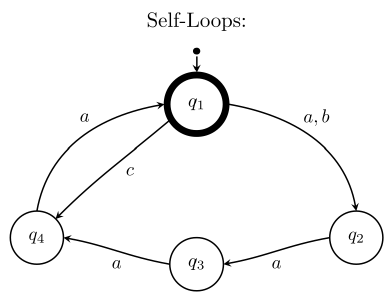}
    \end{subfigure}%
    ~ 
    \begin{subfigure}[t]{0.25\textwidth}
        \raggedright
        \includegraphics[scale=0.40]{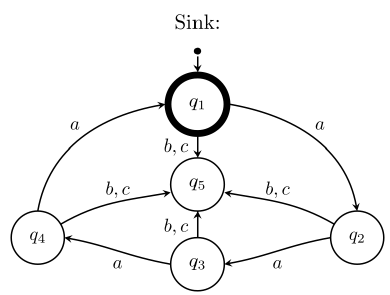}
    \end{subfigure}%
    \caption{Depiction of four different DFAs learned with different
    interpretability heuristics based on the same input dataset. 
    }
    \label{fig:interpretability}
\end{figure}

\section{Experimental Evaluation}
\label{sec:eval}

We implemented a prototype of the three learning algorithms in Python\footnote{\url{https://github.com/simonlutz-tudortmund/Interpretable-Anomaly-Detection}} using the industry-strength Gurobi Optimizer~\cite{gurobi} as an MILP solver.

We evaluated all three learning settings in the context of anomaly detection on three datasets: a modified version of the  ALFRED benchmark set~\cite{Shridhar2020AlfredBenchmarkInterpreting} and the two real-world log datasets HDFS~\cite{xu2009hfds} and BGL~\cite{oliner2007bgl}, provided by the Loghub system log dataset collection~\cite{He2020LoghubLargeCollection}. 
The exact dataset characteristics are shown in Table~\ref{tbl:tb-datasets}.

The ALFRED data set contains sequences of action plans, encoded as bit vectors, that achieve one of 7 goals in the ALFRED setting.
For each of the pairwise combinations of goals (i.e., 42 class combinations), we created a training and a test set with the elements of the first (normal) class and the elements of the second (anomalous) class in a 9:1 ratio. 

The HDFS data set contains system logs for a Hadoop Distributed File System hosted in a private cloud environment.
Each entry in the data set represents a sequence of system events 
, labeled as either normal or anomalous by a set of expert rules. 
Similarly, BGL is a set of logs collected from a BlueGene/L supercomputer system, containing alert and non-alert messages. 
To keep the running time within the timeout of two hours, we restricted ourselves to words with a maximum length of 15 and 10, respectively.

\begin{table}[tbp]
  \centering
  \setlength{\tabcolsep}{12pt}
  \begin{tabular}{lrrrrrr}
    \toprule
    \multicolumn{1}{c}{Dataset} & \multicolumn{1}{c}{$|S|$} & \multicolumn{1}{c}{$|\Sigma|$} & \multicolumn{1}{c}{\% anomalies}  & \multicolumn{1}{c}{LB} & \multicolumn{1}{c}{UB}\\
    \midrule
    ALFRED & 316-462 &  9 & $\approx$ 0.1 & 0.09-0.10 & 0.10-0.11 \\
    HDFS & 108237 &  13 & 0.0585 & 0.058 & 0.059 \\
    BGL & 198192 & 295 & 0.1766 & 0.176 & 0.177 \\    
    \bottomrule
  \end{tabular}
  \caption{Summary of the datasets used.}
  \label{tbl:tb-datasets}
\end{table}

\paragraph{Performance analysis.}

\begin{figure}[tbp]
  \centering
  \begin{subfigure}{0.495\textwidth}
    \centering
    \includegraphics[width=\textwidth]{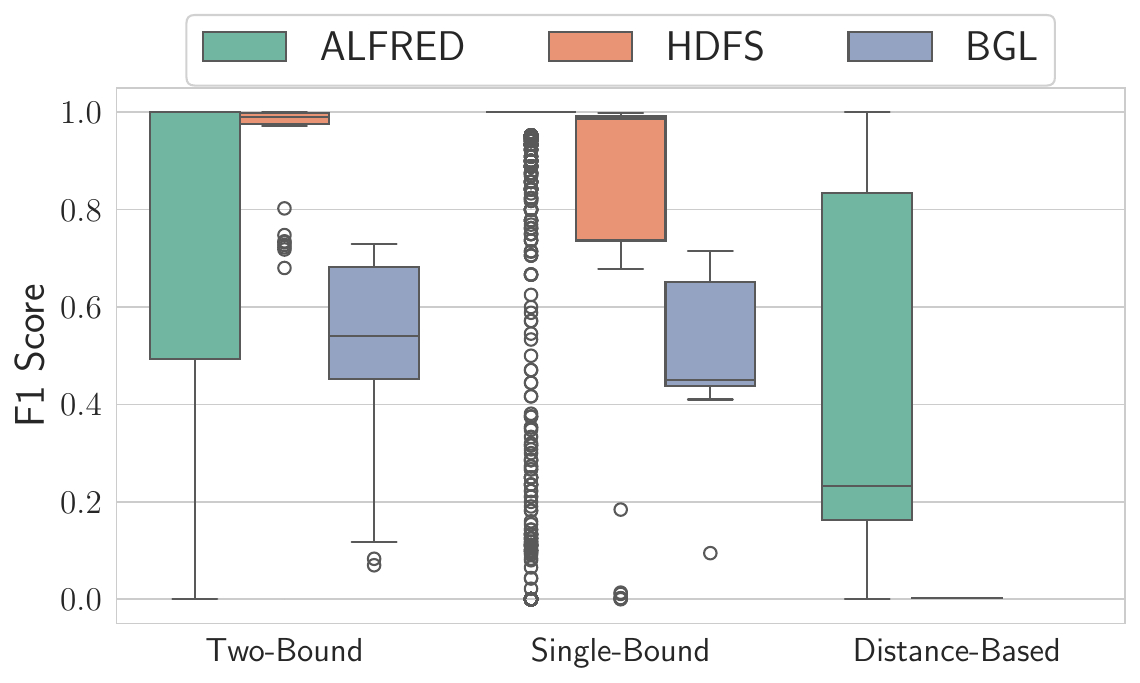}
    \caption{F1 score achieved.}
    \label{fig:f1_score}
  \end{subfigure}
  \hfill
  \begin{subfigure}{0.495\textwidth}
    \centering
    \includegraphics[width=\textwidth]{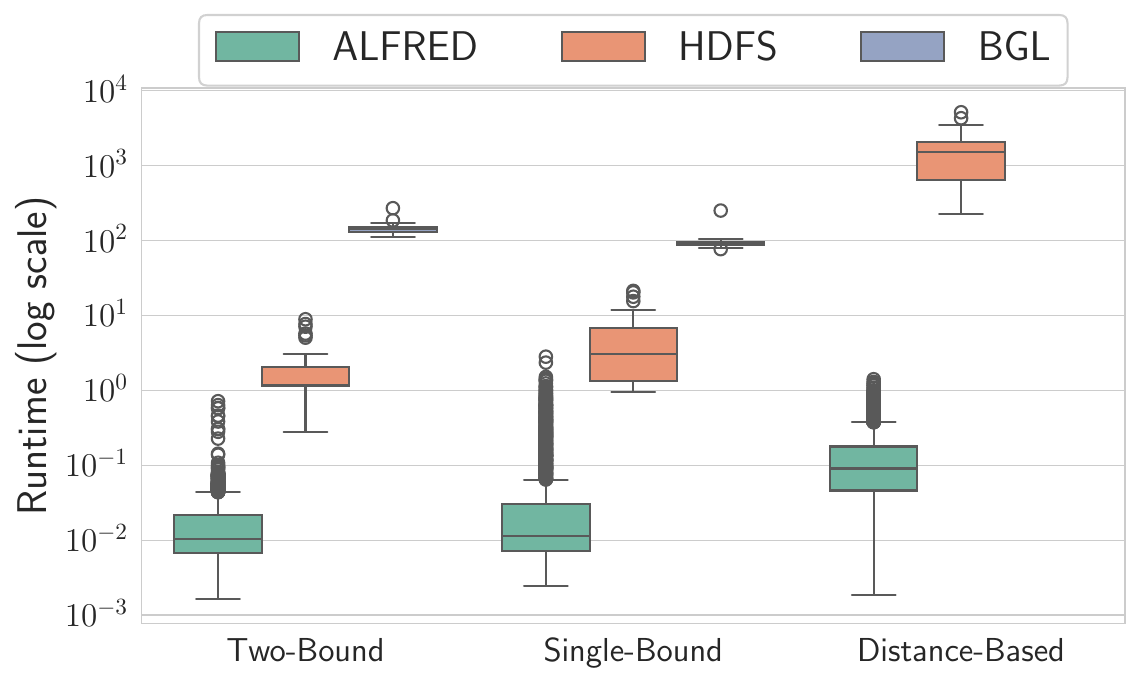}
    \caption{Time required to solve the model.}
    \label{fig:runtime}
  \end{subfigure}
  \caption{Results by three approaches on the selected datasets.}
  \label{fig:results}
\end{figure}

In our first experiment, we want to answer the research question asking which learning setting performs the best for detecting anomalies in sequential data.
For each of the three datasets, we randomly split the data into training and test set (with a 80:20 ratio) and averaged our results over 50 runs.
We examined both the time required to build and solve the model during training (with a two hour timeout) and the F1 score obtained on the test set.
We observed that in every experiment the Two-Bound DFA Learning algorithm was able to find a feasible solution of size two. 
Therefore, for the other two learning settings, we also learned an automaton of size 2.
The results of our experiments are displayed in Figure~\ref{fig:results}. 
They show that both the Two-Bound and Single-Bound learning setting were able to achieve high performance on the ALFRED and HDFS datasets.
The performance on the BGL dataset is slightly worse, which can be explained by the significantly larger alphabet compared to the length of the words. 
This means that simple patterns, such as a single letter, are sufficient to differentiate even two normal sequences.
These patterns may then be picked up by our algorithms instead of patterns separating normal sequences and anomalies.
In terms of running time there seems to be no significant difference between the first two learning settings.
However, the results show that the running time of the algorithms heavily depends on the dataset. 
This is to be expected, as the number of automata constraints $\Phi_{\dfa}^n$ scales with the number of words (and prefixes) in the sample $\sample$, yielding a longer running time.
Compared to the first two settings, the Distance-Based learning algorithm performs significantly worse, especially for the HDFS dataset.
This can be explained by the fact that the HDFS dataset contains words of different length (ranging from 2 to 15), thus the distance between two words is dominated by their difference in length.
The algorithm, by minimizing Equation~\ref{eq:distance}, will therefore separate words mostly based on their length, neglecting any other differences or patterns.


\paragraph{Loosened Bounds.}
For the Two-Bound and Single-Bound learning settings we assume to be given bounds on number of words to be accepted. 
In this second experiment, we will answer the research questions whether these learning algorithms are robust under imprecise bounds.
Focusing on the ALFRED dataset, we analyze the effect of loosening the learning bounds on the F1 score by increasing or decreasing the bounds by a value between 0 and 0.05.
The results are displayed in Figure~\ref{fig:loose-results}.
They show, that the F1 score is highest for the tightest bounds, as expected.
Furthermore, the performance of both the Two-Bound and Single-Bound learning algorithm is quite robust to loosening the bounds.
The F1 score drops only slightly even for a loosened bound of 0.05, which represents a 50\% deviation from the original number of anomalies.

\begin{figure}[tbp]
  \centering
  \includegraphics[scale=0.6]{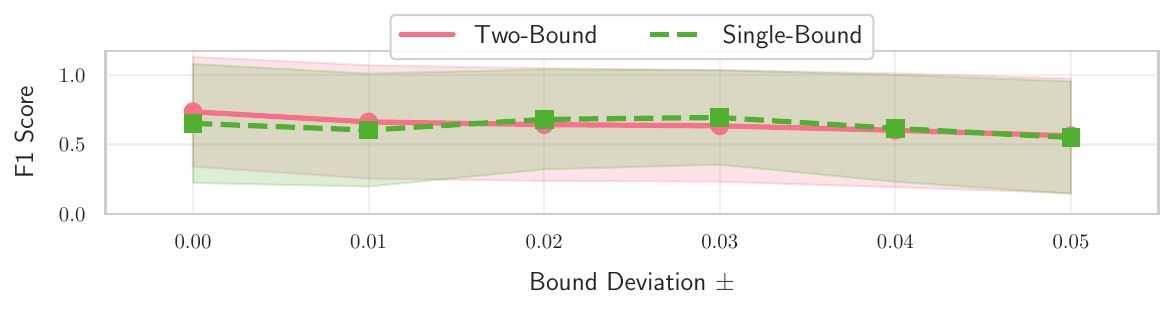}
  \caption{F1 score for the ALFRED dataset with loosened bounds.}
  \label{fig:loose-results}
\end{figure}


\paragraph{Data Complexity.}
For all datasets, the Two-Bound setting always found a feasible solution of size two.
Since this seemed rather small, we also trained a DFA in the classical passive learning setting (i.e., with labels) on the same datasets.
The resulting automata had size four for the ALFRED dataset and size two for the HDFS dataset, while no DFA with up to thirty states could be found for BGL.
This shows that even small automata are capable of separating the normal sequences and anomalies.

\begin{figure}[tbp]
  \centering
  \includegraphics[height= 0.15\textheight, width= \textwidth]{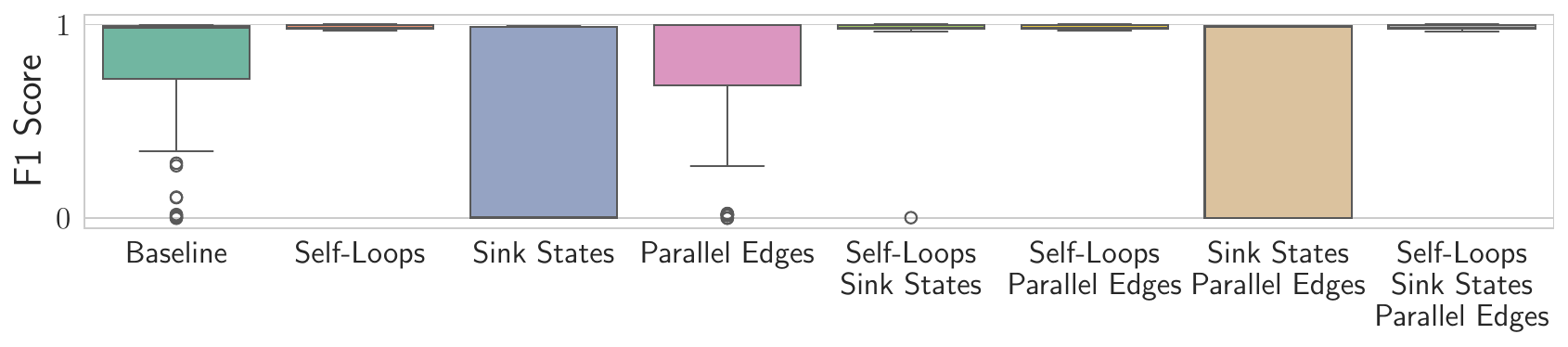}
  \caption{F1 score for combinations of interpretability heuristics on the HDFS dataset.}
  \label{fig:f1_interpretability}
\end{figure}

\paragraph{Influence of Interpretability Heuristics.}
In this section we investigate the influence of our interpretability heuristics on the algorithm's overall performance.
For each possible combination of heuristics, we learned an automaton in the first learning setting on the HDFS dataset with tight bounds.
Since any reasonable automaton that includes a sink state has a minimum of three states, we set the minimum number of states of the learned DFAs to three.
The results, shown in Figure~\ref{fig:f1_interpretability}, indicate that increasing the number of self-loops improves the F1 score.
Including more parallel edges does not seem to affect the overall performance, whereas  introducing sink states greatly impedes the resulting F1 score.
When combining multiple heuristics, the overall performance behaves the same as in one of the experiments using only a single heuristic.
This indicates that the influence of one heuristic on the objective function outweighs the other heuristics.
We leave a more thorough evaluation of the interpretability heuristics as part of future work.


\begin{figure}[tbp]
  \centering
  \includegraphics[scale=0.6]{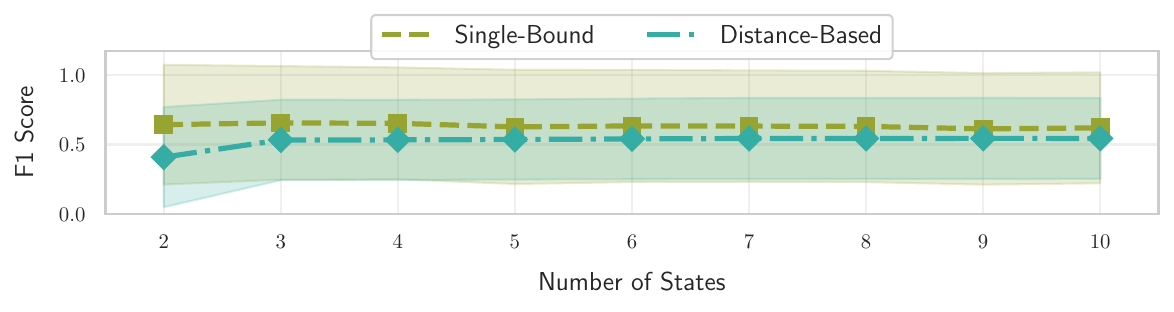}
  \caption{F1 score for the ALFRED dataset with changing number of states.}
  \label{fig:f1_states}
\end{figure}

\paragraph{Influence of Automaton Size.}
In the second and third learning setting, we assume the size of the automaton to be given by the user.
In this section, we investigate how this influences the overall performance of the learned automata. 
We conducted experiments with varying sizes (ranging from $2$ to $10$) on the ALFRED dataset.
The results are displayed in Figure~\ref{fig:f1_states}.
They indicate that increasing the size of the learned automaton does not significantly impact its performance.
For the Single-Bound setting the performance remains roughly constant, while for the  Distance-Based approach there is only a slight increase in performance between size $2$ and $3$.
This indicates that once a reasonable solution is found the learning algorithm starts to only learn unreachable states when the size is increased.
These results may be connected to the complexity of our data and we leave a more thorough investigation as part of future work.




\section{Conclusion}
This paper has studied the task of learning a deterministic finite automaton from a sample of unlabeled words that could be used to separate normal form anomalous words.
We proposed three unsupervised learning settings, studied their properties (e.g., their computational complexity), and developed  learning algorithms that utilize off-the-shelf constraint optimization tools.
In addition, we have shown how regularization can improve the interpretability of the learned DFAs.
Our empirical evaluation has demonstrated practical feasibility in the context of three anomaly detection benchmarks.

We see various promising directions for future research.
First, the analysis of the complexity of the third learning setting remains an open problem to be tackled in the future. 
Second, we plan to develop heuristics that sacrifice the optimality of a solution in favor of computational efficiency.
Third, we want to extend our approach to more expressive automata classes, for instance, register automata, to handle data over continuous domains.

\chapter*{Acknowledgements}
\label{sec:Ack}
This work has been financially supported by Deutsche Forschungsgemeinschaft, DFG Project numbers 459419731, 495857894
(STING), and 434592664, and the Research Center Trustworthy Data Science and Security (https://rc-trust.ai), one of the Research Alliance centers within the UA Ruhr (https://uaruhr.de).

\bibliographystyle{splncs04}
\bibliography{paper}

\begin{thebibliography}{10}
\providecommand{\url}[1]{\texttt{#1}}
\providecommand{\urlprefix}{URL }
\providecommand{\doi}[1]{https://doi.org/#1}

\bibitem{DBLP:conf/fm/AartsHKOV12}
Aarts, F., Heidarian, F., Kuppens, H., Olsen, P., Vaandrager, F.W.: Automata learning through counterexample guided abstraction refinement. In: Giannakopoulou, D., M{\'{e}}ry, D. (eds.) {FM} 2012: Formal Methods - 18th International Symposium, Paris, France, August 27-31, 2012. Proceedings. Lecture Notes in Computer Science, vol.~7436, pp. 10--27. Springer (2012). \doi{10.1007/978-3-642-32759-9\_4}, \url{https://doi.org/10.1007/978-3-642-32759-9\_4}

\bibitem{DBLP:conf/concur/AartsV10}
Aarts, F., Vaandrager, F.W.: Learning {I/O} automata. In: Gastin, P., Laroussinie, F. (eds.) {CONCUR} 2010 - Concurrency Theory, 21th International Conference, {CONCUR} 2010, Paris, France, August 31-September 3, 2010. Proceedings. Lecture Notes in Computer Science, vol.~6269, pp. 71--85. Springer (2010). \doi{10.1007/978-3-642-15375-4\_6}, \url{https://doi.org/10.1007/978-3-642-15375-4\_6}

\bibitem{aarts2014tomte}
Aarts, F.D.: Tomte: bridging the gap between active learning and real-world systems. Sl: sn (2014)

\bibitem{aichernig2018model}
Aichernig, B.K., Mostowski, W., Mousavi, M.R., Tappler, M., Taromirad, M.: Model learning and model-based testing. In: Machine Learning for Dynamic Software Analysis: Potentials and Limits: International Dagstuhl Seminar 16172, Dagstuhl Castle, Germany, April 24-27, 2016, Revised Papers. pp. 74--100. Springer (2018)

\bibitem{Angluin87}
Angluin, D.: Learning regular sets from queries and counterexamples. Inf. Comput.  \textbf{75}(2),  87--106 (1987). \doi{10.1016/0890-5401(87)90052-6}, \url{https://doi.org/10.1016/0890-5401(87)90052-6}

\bibitem{DBLP:conf/ijcai/AngluinEF15}
Angluin, D., Eisenstat, S., Fisman, D.: Learning regular languages via alternating automata. In: Yang, Q., Wooldridge, M.J. (eds.) Proceedings of the Twenty-Fourth International Joint Conference on Artificial Intelligence, {IJCAI} 2015, Buenos Aires, Argentina, July 25-31, 2015. pp. 3308--3314. {AAAI} Press (2015), \url{http://ijcai.org/Abstract/15/466}

\bibitem{BalleM15}
Balle, B., Mohri, M.: Learning weighted automata. In: Maletti, A. (ed.) Algebraic Informatics - 6th International Conference, {CAI} 2015, Stuttgart, Germany, September 1-4, 2015. Proceedings. Lecture Notes in Computer Science, vol.~9270, pp. 1--21. Springer (2015). \doi{10.1007/978-3-319-23021-4\_1}, \url{https://doi.org/10.1007/978-3-319-23021-4\_1}

\bibitem{BergadanoV96}
Bergadano, F., Varricchio, S.: Learning behaviors of automata from multiplicity and equivalence queries. {SIAM} J. Comput.  \textbf{25}(6),  1268--1280 (1996). \doi{10.1137/S009753979326091X}, \url{https://doi.org/10.1137/S009753979326091X}

\bibitem{BiermannF72}
Biermann, A.W., Feldman, J.A.: On the synthesis of finite-state machines from samples of their behavior. {IEEE} Trans. Computers  \textbf{21}(6),  592--597 (1972). \doi{10.1109/TC.1972.5009015}, \url{https://doi.org/10.1109/TC.1972.5009015}

\bibitem{BjorklundFK13}
Bj{\"{o}}rklund, J., Fernau, H., Kasprzik, A.: {MAT} learning of universal automata. In: Dediu, A., Mart{\'{\i}}n{-}Vide, C., Truthe, B. (eds.) Language and Automata Theory and Applications - 7th International Conference, {LATA} 2013, Bilbao, Spain, April 2-5, 2013. Proceedings. Lecture Notes in Computer Science, vol.~7810, pp. 141--152. Springer (2013). \doi{10.1007/978-3-642-37064-9\_14}, \url{https://doi.org/10.1007/978-3-642-37064-9\_14}

\bibitem{BolligHKL09}
Bollig, B., Habermehl, P., Kern, C., Leucker, M.: Angluin-style learning of {NFA}. In: Boutilier, C. (ed.) {IJCAI} 2009, Proceedings of the 21st International Joint Conference on Artificial Intelligence, Pasadena, California, USA, July 11-17, 2009. pp. 1004--1009 (2009), \url{http://ijcai.org/Proceedings/09/Papers/170.pdf}

\bibitem{DBLP:journals/corr/abs-2403-02019}
Bruy{\`{e}}re, V., Garhewal, B., P{\'{e}}rez, G.A., Staquet, G., Vaandrager, F.W.: Active learning of mealy machines with timers. CoRR  \textbf{abs/2403.02019} (2024). \doi{10.48550/ARXIV.2403.02019}, \url{https://doi.org/10.48550/arXiv.2403.02019}

\bibitem{DBLP:journals/fac/CasselHJS16}
Cassel, S., Howar, F., Jonsson, B., Steffen, B.: Active learning for extended finite state machines. Formal Aspects Comput.  \textbf{28}(2),  233--263 (2016). \doi{10.1007/S00165-016-0355-5}, \url{https://doi.org/10.1007/s00165-016-0355-5}

\bibitem{DBLP:journals/csur/ChandolaBK09}
Chandola, V., Banerjee, A., Kumar, V.: Anomaly detection: {A} survey. {ACM} Comput. Surv.  \textbf{41}(3),  15:1--15:58 (2009). \doi{10.1145/1541880.1541882}, \url{https://doi.org/10.1145/1541880.1541882}

\bibitem{Frohme19}
Frohme, M.T.: Active automata learning with adaptive distinguishing sequences. CoRR  \textbf{abs/1902.01139} (2019), \url{http://arxiv.org/abs/1902.01139}

\bibitem{garcia1990use}
Garc{\'\i}a, P., Segarra, E., Vidal, E., Galiano, I.: On the use of the morphic generator grammatical inference (mggi) methodology in automatic speech recognition. International Journal of Pattern Recognition and Artificial Intelligence  \textbf{4}(04),  667--685 (1990)

\bibitem{DBLP:books/fm/GareyJ79}
Garey, M.R., Johnson, D.S.: Computers and Intractability: {A} Guide to the Theory of NP-Completeness. W. H. Freeman (1979)

\bibitem{DBLP:conf/ifm/GarhewalVHSLS20}
Garhewal, B., Vaandrager, F.W., Howar, F., Schrijvers, T., Lenaerts, T., Smits, R.: Grey-box learning of register automata. In: Dongol, B., Troubitsyna, E. (eds.) Integrated Formal Methods - 16th International Conference, {IFM} 2020, Lugano, Switzerland, November 16-20, 2020, Proceedings. Lecture Notes in Computer Science, vol. 12546, pp. 22--40. Springer (2020). \doi{10.1007/978-3-030-63461-2\_2}, \url{https://doi.org/10.1007/978-3-030-63461-2\_2}

\bibitem{DBLP:journals/iandc/Gold78}
Gold, E.M.: Complexity of automaton identification from given data. Inf. Control.  \textbf{37}(3),  302--320 (1978). \doi{10.1016/S0019-9958(78)90562-4}, \url{https://doi.org/10.1016/S0019-9958(78)90562-4}

\bibitem{DBLP:conf/cade/GrinchteinLP06}
Grinchtein, O., Leucker, M., Piterman, N.: Inferring network invariants automatically. In: Furbach, U., Shankar, N. (eds.) Automated Reasoning, Third International Joint Conference, {IJCAR} 2006, Seattle, WA, USA, August 17-20, 2006, Proceedings. Lecture Notes in Computer Science, vol.~4130, pp. 483--497. Springer (2006). \doi{10.1007/11814771\_40}, \url{https://doi.org/10.1007/11814771\_40}

\bibitem{groce2006adaptive}
Groce, A., Peled, D., Yannakakis, M.: Adaptive model checking. Logic Journal of the IGPL  \textbf{14}(5),  729--744 (2006)

\bibitem{gurobi}
{Gurobi Optimization, LLC}: {Gurobi Optimizer Reference Manual} (2022), \url{https://www.gurobi.com}

\bibitem{habermehl2005regular}
Habermehl, P., Vojnar, T.: Regular model checking using inference of regular languages. Electronic Notes in Theoretical Computer Science  \textbf{138}(3),  21--36 (2005)

\bibitem{HeerdtKR020}
van Heerdt, G., Kupke, C., Rot, J., Silva, A.: Learning weighted automata over principal ideal domains. In: Goubault{-}Larrecq, J., K{\"{o}}nig, B. (eds.) Foundations of Software Science and Computation Structures - 23rd International Conference, {FOSSACS} 2020, Held as Part of the European Joint Conferences on Theory and Practice of Software, {ETAPS} 2020, Dublin, Ireland, April 25-30, 2020, Proceedings. Lecture Notes in Computer Science, vol. 12077, pp. 602--621. Springer (2020). \doi{10.1007/978-3-030-45231-5\_31}, \url{https://doi.org/10.1007/978-3-030-45231-5\_31}

\bibitem{DBLP:conf/icgi/HeuleV10}
Heule, M., Verwer, S.: Exact {DFA} identification using {SAT} solvers. In: Sempere, J.M., Garc{\'{\i}}a, P. (eds.) Grammatical Inference: Theoretical Results and Applications, 10th International Colloquium, {ICGI} 2010, Valencia, Spain, September 13-16, 2010. Proceedings. Lecture Notes in Computer Science, vol.~6339, pp. 66--79. Springer (2010). \doi{10.1007/978-3-642-15488-1\_7}, \url{https://doi.org/10.1007/978-3-642-15488-1\_7}

\bibitem{de2010grammatical}
De~la Higuera, C.: Grammatical inference: learning automata and grammars. Cambridge University Press (2010)

\bibitem{Howar12}
Howar, F.: Active learning of interface programs. Ph.D. thesis, Dortmund University of Technology (2012), \url{https://hdl.handle.net/2003/29486}

\bibitem{hungar2003domain}
Hungar, H., Niese, O., Steffen, B.: Domain-specific optimization in automata learning. In: Computer Aided Verification: 15th International Conference, CAV 2003, Boulder, CO, USA, July 8-12, 2003. Proceedings 15. pp. 315--327. Springer (2003)

\bibitem{Irfan10}
Irfan, M.N., Oriat, C., Groz, R.: Angluin style finite state machine inference with non-optimal counterexamples. In: Proceedings of the First International Workshop on Model Inference In Testing. p. 11–19. MIIT '10, Association for Computing Machinery, New York, NY, USA (2010). \doi{10.1145/1868044.1868046}, \url{https://doi.org/10.1145/1868044.1868046}

\bibitem{DBLP:phd/dnb/Isberner15}
Isberner, M.: Foundations of active automata learning: an algorithmic perspective. Ph.D. thesis, Technical University Dortmund, Germany (2015), \url{https://hdl.handle.net/2003/34282}

\bibitem{DBLP:journals/ml/IsbernerHS14}
Isberner, M., Howar, F., Steffen, B.: Learning register automata: from languages to program structures. Mach. Learn.  \textbf{96}(1-2),  65--98 (2014). \doi{10.1007/S10994-013-5419-7}, \url{https://doi.org/10.1007/s10994-013-5419-7}

\bibitem{IsbernerHS14}
Isberner, M., Howar, F., Steffen, B.: The {TTT} algorithm: {A} redundancy-free approach to active automata learning. In: Bonakdarpour, B., Smolka, S.A. (eds.) Runtime Verification - 5th International Conference, {RV} 2014, Toronto, ON, Canada, September 22-25, 2014. Proceedings. Lecture Notes in Computer Science, vol.~8734, pp. 307--322. Springer (2014). \doi{10.1007/978-3-319-11164-3\_26}, \url{https://doi.org/10.1007/978-3-319-11164-3\_26}

\bibitem{IsbernerS14}
Isberner, M., Steffen, B.: An abstract framework for counterexample analysis in active automata learning. In: Clark, A., Kanazawa, M., Yoshinaka, R. (eds.) Proceedings of the 12th International Conference on Grammatical Inference, {ICGI} 2014, Kyoto, Japan, September 17-19, 2014. {JMLR} Workshop and Conference Proceedings, vol.~34, pp. 79--93. JMLR.org (2014), \url{http://proceedings.mlr.press/v34/isberner14a.html}

\bibitem{Kearns94}
Kearns, M.J., Vazirani, U.V.: An Introduction to Computational Learning Theory. {MIT} Press (1994), \url{https://mitpress.mit.edu/books/introduction-computational-learning-theory}

\bibitem{DBLP:journals/tcs/KnuutilaS94}
Knuutila, T., Steinby, M.: The inference of tree languages from finite samples: An algebraic approach. Theor. Comput. Sci.  \textbf{129}(2),  337--367 (1994). \doi{10.1016/0304-3975(94)90033-7}, \url{https://doi.org/10.1016/0304-3975(94)90033-7}

\bibitem{DBLP:conf/icgi/LangPP98}
Lang, K.J., Pearlmutter, B.A., Price, R.A.: Results of the abbadingo one {DFA} learning competition and a new evidence-driven state merging algorithm. In: Honavar, V.G., Slutzki, G. (eds.) Grammatical Inference, 4th International Colloquium, ICGI-98, Ames, Iowa, USA, July 12-14, 1998, Proceedings. Lecture Notes in Computer Science, vol.~1433, pp. 1--12. Springer (1998). \doi{10.1007/BFB0054059}, \url{https://doi.org/10.1007/BFb0054059}

\bibitem{DBLP:conf/isola/LeuckerN12}
Leucker, M., Neider, D.: Learning minimal deterministic automata from inexperienced teachers. In: Margaria, T., Steffen, B. (eds.) Leveraging Applications of Formal Methods, Verification and Validation. Technologies for Mastering Change - 5th International Symposium, ISoLA 2012, Heraklion, Crete, Greece, October 15-18, 2012, Proceedings, Part {I}. Lecture Notes in Computer Science, vol.~7609, pp. 524--538. Springer (2012). \doi{10.1007/978-3-642-34026-0\_39}, \url{https://doi.org/10.1007/978-3-642-34026-0\_39}

\bibitem{MalerP95}
Maler, O., Pnueli, A.: On the learnability of infinitary regular sets. Inf. Comput.  \textbf{118}(2),  316--326 (1995). \doi{10.1006/INCO.1995.1070}, \url{https://doi.org/10.1006/inco.1995.1070}

\bibitem{MertenHSM11}
Merten, M., Howar, F., Steffen, B., Margaria, T.: Automata learning with on-the-fly direct hypothesis construction. In: H{\"{a}}hnle, R., Knoop, J., Margaria, T., Schreiner, D., Steffen, B. (eds.) Leveraging Applications of Formal Methods, Verification, and Validation - International Workshops, {SARS} 2011 and {MLSC} 2011, Held Under the Auspices of ISoLA 2011 in Vienna, Austria, October 17-18, 2011. Revised Selected Papers. Communications in Computer and Information Science, vol.~336, pp. 248--260. Springer (2011). \doi{10.1007/978-3-642-34781-8\_19}, \url{https://doi.org/10.1007/978-3-642-34781-8\_19}

\bibitem{DBLP:conf/atva/Neider12}
Neider, D.: Computing minimal separating {DFAs} and regular invariants using {SAT} and {SMT} solvers. In: Chakraborty, S., Mukund, M. (eds.) Automated Technology for Verification and Analysis - 10th International Symposium, {ATVA} 2012, Thiruvananthapuram, India, October 3-6, 2012. Proceedings. Lecture Notes in Computer Science, vol.~7561, pp. 354--369. Springer (2012). \doi{10.1007/978-3-642-33386-6\_28}, \url{https://doi.org/10.1007/978-3-642-33386-6\_28}

\bibitem{DBLP:conf/aaai/NeiderGGT0021}
Neider, D., Gaglione, J., Gavran, I., Topcu, U., Wu, B., Xu, Z.: Advice-guided reinforcement learning in a non-markovian environment. In: Thirty-Fifth {AAAI} Conference on Artificial Intelligence, {AAAI} 2021, Thirty-Third Conference on Innovative Applications of Artificial Intelligence, {IAAI} 2021, The Eleventh Symposium on Educational Advances in Artificial Intelligence, {EAAI} 2021, Virtual Event, February 2-9, 2021. pp. 9073--9080. {AAAI} Press (2021), \url{https://ojs.aaai.org/index.php/AAAI/article/view/17096}

\bibitem{DBLP:conf/nfm/NeiderJ13}
Neider, D., Jansen, N.: Regular model checking using solver technologies and automata learning. In: Brat, G., Rungta, N., Venet, A. (eds.) {NASA} Formal Methods, 5th International Symposium, {NFM} 2013, Moffett Field, CA, USA, May 14-16, 2013. Proceedings. Lecture Notes in Computer Science, vol.~7871, pp. 16--31. Springer (2013). \doi{10.1007/978-3-642-38088-4\_2}, \url{https://doi.org/10.1007/978-3-642-38088-4\_2}

\bibitem{Niese2003}
Niese, O.: An integrated approach to testing complex systems. Ph.D. thesis, Technical University of Dortmund, Germany (2003), \url{http://eldorado.uni-dortmund.de:8080/0x81d98002\_0x0007b62b}

\bibitem{oliner2007bgl}
Oliner, A., Stearley, J.: What supercomputers say: A study of five system logs. In: 37th Annual IEEE/IFIP International Conference on Dependable Systems and Networks (DSN'07). pp. 575--584 (2007). \doi{10.1109/DSN.2007.103}

\bibitem{oliveira2001efficient}
Oliveira, A.L., Silva, J.P.: Efficient algorithms for the inference of minimum size dfas. Machine Learning  \textbf{44},  93--119 (2001)

\bibitem{oncina1993inference}
Oncina, J., Garc{\i}a, P.: Inference of recognizable tree sets. Tech. rep., Tech. report, Universidad de Alicante, 1993. DSIC-II/47/93 (1993)

\bibitem{oncina1992inferring}
Oncina, J., Garcia, P., et~al.: Inferring regular languages in polynomial update time. Pattern recognition and image analysis  \textbf{1}(49-61),  10--1142 (1992)

\bibitem{Petrenko0GHO14}
Petrenko, A., Li, K., Groz, R., Hossen, K., Oriat, C.: Inferring approximated models for systems engineering. In: 15th International {IEEE} Symposium on High-Assurance Systems Engineering, {HASE} 2014, Miami Beach, FL, USA, January 9-11, 2014. pp. 249--253. {IEEE} Computer Society (2014). \doi{10.1109/HASE.2014.46}, \url{https://doi.org/10.1109/HASE.2014.46}

\bibitem{rieger1995inferring}
Rieger, A.: Inferring probabilistic automata from sensor data for robot navigation. Univ., Fachbereich Informatik, Lehrstuhl 8 (1995)

\bibitem{rivest1989inference}
Rivest, R.L., Schapire, R.E.: Inference of finite automata using homing sequences. In: Proceedings of the twenty-first annual ACM symposium on Theory of computing. pp. 411--420 (1989)

\bibitem{RivestS93}
Rivest, R.L., Schapire, R.E.: Inference of finite automata using homing sequences. Inf. Comput.  \textbf{103}(2),  299--347 (1993). \doi{10.1006/INCO.1993.1021}, \url{https://doi.org/10.1006/inco.1993.1021}

\bibitem{pmlr-v80-ruff18a}
Ruff, L., Vandermeulen, R., Goernitz, N., Deecke, L., Siddiqui, S.A., Binder, A., M{\"u}ller, E., Kloft, M.: Deep one-class classification. In: Dy, J., Krause, A. (eds.) Proceedings of the 35th International Conference on Machine Learning. Proceedings of Machine Learning Research, vol.~80, pp. 4393--4402. PMLR (10--15 Jul 2018), \url{https://proceedings.mlr.press/v80/ruff18a.html}

\bibitem{ShahbazG09}
Shahbaz, M., Groz, R.: Inferring mealy machines. In: Cavalcanti, A., Dams, D. (eds.) {FM} 2009: Formal Methods, Second World Congress, Eindhoven, The Netherlands, November 2-6, 2009. Proceedings. Lecture Notes in Computer Science, vol.~5850, pp. 207--222. Springer (2009). \doi{10.1007/978-3-642-05089-3\_14}, \url{https://doi.org/10.1007/978-3-642-05089-3\_14}

\bibitem{Shridhar2020AlfredBenchmarkInterpreting}
Shridhar, M., Thomason, J., Gordon, D., Bisk, Y., Han, W., Mottaghi, R., Zettlemoyer, L., Fox, D.: {ALFRED}: A benchmark for interpreting grounded instructions for everyday tasks. In: 2020 {IEEE}/{CVF} Conference on Computer Vision and Pattern Recognition ({CVPR}). pp. 10737--10746. IEEE, New York, NY, USA (Jun 2020). \doi{10.1109/CVPR42600.2020.01075}, {CVPR} 2020

\bibitem{Shvo2021InterpretableSequenceClassification}
Shvo, M., Li, A.C., Icarte, R.T., McIlraith, S.A.: Interpretable sequence classification via discrete optimization. Proceedings of the {AAAI} Conference on Artificial Intelligence  \textbf{35}(11),  9647--9656 (May 2021). \doi{10.1609/aaai.v35i11.17161}, {AAAI} 2021

\bibitem{Trakhtenbrot1973FiniteA}
Trakhtenbrot, B.A., Barzdin, Y.M.: Finite automata: Behavior and synthesis. Elsevier (1973)

\bibitem{VaandragerGRW22}
Vaandrager, F.W., Garhewal, B., Rot, J., Wi{\ss}mann, T.: A new approach for active automata learning based on apartness. In: Fisman, D., Rosu, G. (eds.) Tools and Algorithms for the Construction and Analysis of Systems - 28th International Conference, {TACAS} 2022, Held as Part of the European Joint Conferences on Theory and Practice of Software, {ETAPS} 2022, Munich, Germany, April 2-7, 2022, Proceedings, Part {I}. Lecture Notes in Computer Science, vol. 13243, pp. 223--243. Springer (2022). \doi{10.1007/978-3-030-99524-9\_12}, \url{https://doi.org/10.1007/978-3-030-99524-9\_12}

\bibitem{xu2009hfds}
Xu, W., Huang, L., Fox, A., Patterson, D., Jordan, M.I.: Detecting large-scale system problems by mining console logs. In: Proceedings of the ACM SIGOPS 22nd Symposium on Operating Systems Principles. p. 117–132. SOSP '09, Association for Computing Machinery, New York, NY, USA (2009). \doi{10.1145/1629575.1629587}, \url{https://doi.org/10.1145/1629575.1629587}

\bibitem{He2020LoghubLargeCollection}
Zhu, J., He, S., He, P., Liu, J., Lyu, M.R.: Loghub: {A} large collection of system log datasets for ai-driven log analytics. In: 34th {IEEE} International Symposium on Software Reliability Engineering, {ISSRE} 2023, Florence, Italy, October 9-12, 2023. pp. 355--366. {IEEE} (2023). \doi{10.1109/ISSRE59848.2023.00071}, \url{https://doi.org/10.1109/ISSRE59848.2023.00071}

\end{thebibliography}

\end{document}